\theoremstyle{plain}
\newtheorem{theorem}{Theorem}[section]
\newtheorem{lemma}[theorem]{Lemma}
\newtheorem{corollary}{Corollary}
\newtheorem{proposition}{Proposition}
\newtheorem{definition}{Definition}
\newtheorem{remark}{Remark}
\def\E{\mathbb{E}}
\def\R{\mathbb{R}}
\title{Fast Graph Generation via Spectral Diffusion}
\author{Tianze~Luo$^\dagger$,
        Zhanfeng~Mo$^\dagger$,
        Sinno~Jialin~Pan*
\IEEEcompsocitemizethanks{\IEEEcompsocthanksitem T. Luo, Z. Mo and S. J. Pan are with the School
of Computer Science and Engineering, Nanyang Technological University, Singapore. $\dagger$ indicates co-first authors with equal contribution. * indicates corresponding author.

E-mail: \{tianze001,zhanfeng001,sinnopan\}@ntu.edu.sg
}
}
\begin{document}

\IEEEtitleabstractindextext{%
\begin{abstract}
Generating graph-structured data is a challenging problem, which requires learning the underlying distribution of graphs. Various models such as graph VAE, graph GANs, and graph diffusion models have been proposed to generate meaningful and reliable graphs, among which the diffusion models have achieved state-of-the-art performance. In this paper, we argue that running full-rank  diffusion SDEs on the whole graph adjacency matrix space hinders diffusion models from learning graph topology generation, and hence significantly deteriorates the quality of generated graph data. To address this limitation, we propose an efficient yet effective Graph Spectral Diffusion Model (GSDM), which is driven by low-rank diffusion SDEs on the graph spectrum space. Our spectral diffusion model is further proven to enjoy a substantially stronger theoretical guarantee than standard diffusion models. Extensive experiments across various datasets demonstrate that, our proposed GSDM turns out to be the SOTA model, by exhibiting both significantly higher generation quality and much less computational consumption than the baselines.  
\end{abstract}

\begin{IEEEkeywords}
Graph generative model, graph diffusion, stochastic differential equations. 
\end{IEEEkeywords}}
\maketitle

\section{Introduction}

\IEEEPARstart{L}{earning} to generate graph-structural data not only requires knowing the nodes' feature distribution, but also a deep understanding of the underlying graph topology, which is essential to modelling various graph instances, such as social networks \cite{yang2020factorizable,wang2019mcne}, molecule structures \cite{zang2020moflow,shi2020graphaf}, neural architectures \cite{lee2021rapid}, recommender systems \cite{liu2021interpretable}, etc. Conventional likelihood-based graph generative models, e.g. GraphGAN \cite{wang2019learning}, GraphVAE \cite{simonovsky2018graphvae} and GraphRNN \cite{you2018graphrnn}, have demonstrated great strength on graph generation tasks. In general, a likelihood-based model is designed to learn the likelihood function of the underlying graph data distribution, with which one can draw new samples with preserved graph properties from the distribution of interest. However, most likelihood-based generative models suffer from either limited quality of modeling graph structures, or considerable computational burden \cite{jo2022score}.

\begin{figure*}[htbp]
    \centering
    \includegraphics[width=0.915\textwidth]{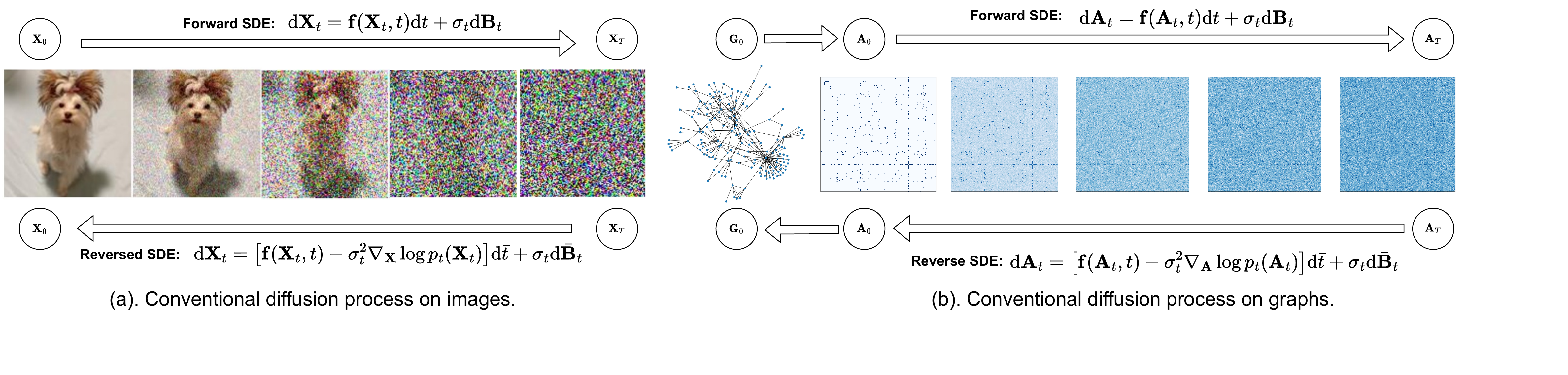}
    \caption{Illustation of the difference between applying the conventional SDE diffusion process on images (shown in (a)) and on graphs (shown in (b)). }
    \label{fig:intro}
\end{figure*}

\begin{figure*}
    \centering
    \begin{subfigure}[b]{0.275\textwidth}
         \centering
         \includegraphics[width=\textwidth]{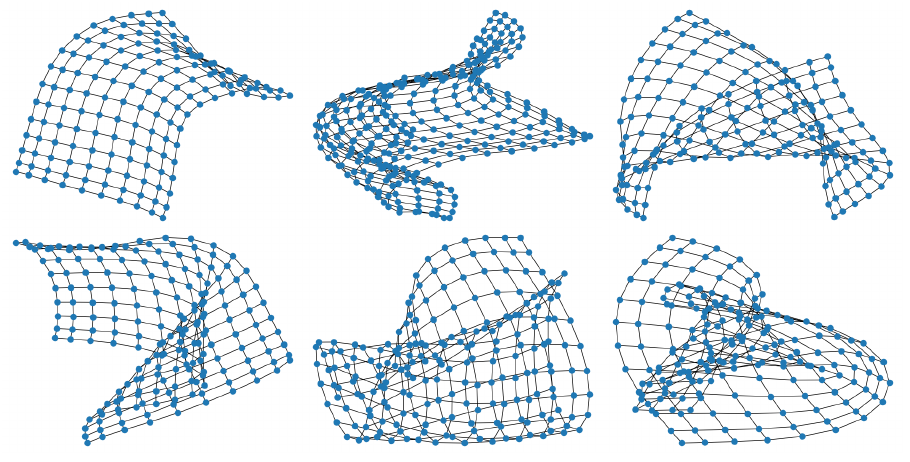}
         \caption{Real data}
         \label{fig:y equals x}
     \end{subfigure}
     \hfill
     \begin{subfigure}[b]{0.275\textwidth}
         \centering
         \includegraphics[width=\textwidth]{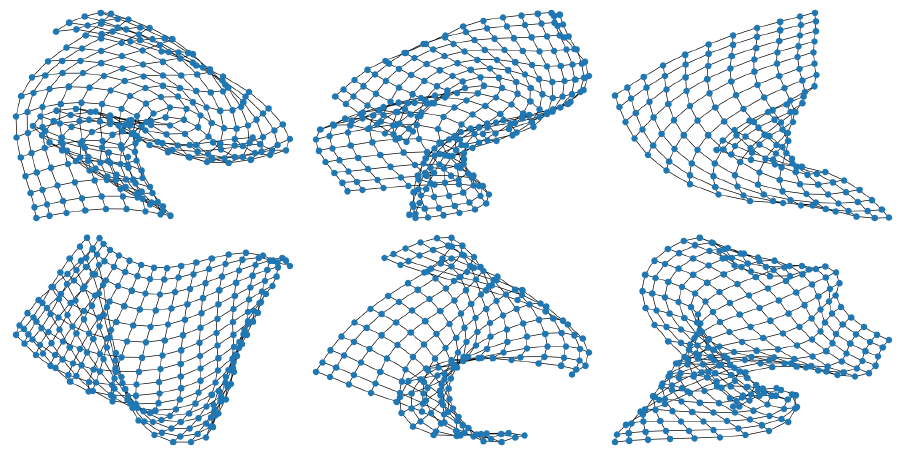}
         \caption{GSDM samples}
         \label{fig:three sin x}
     \end{subfigure}
     \hfill
     \begin{subfigure}[b]{0.275\textwidth}
         \centering
         \includegraphics[width=\textwidth]{../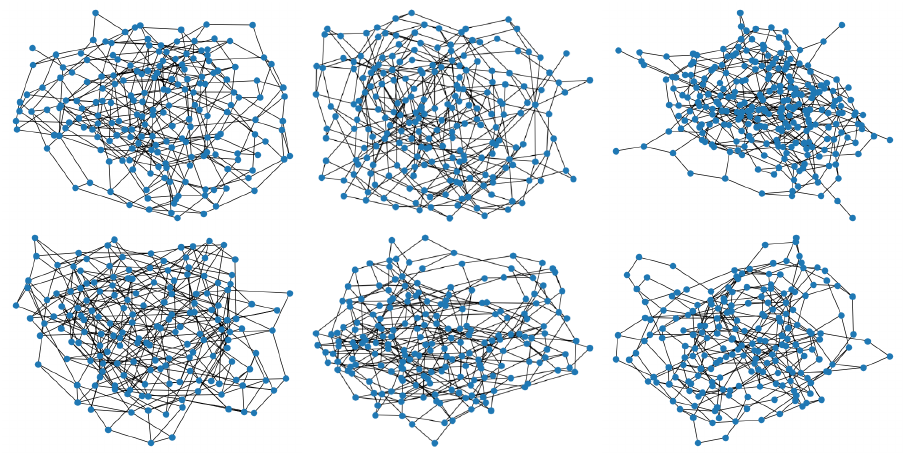}
         \caption{GDSS samples}
         \label{fig:five over x}
     \end{subfigure}
     \begin{subfigure}[b]{0.275\textwidth}
         \centering
         \includegraphics[width=\textwidth]{../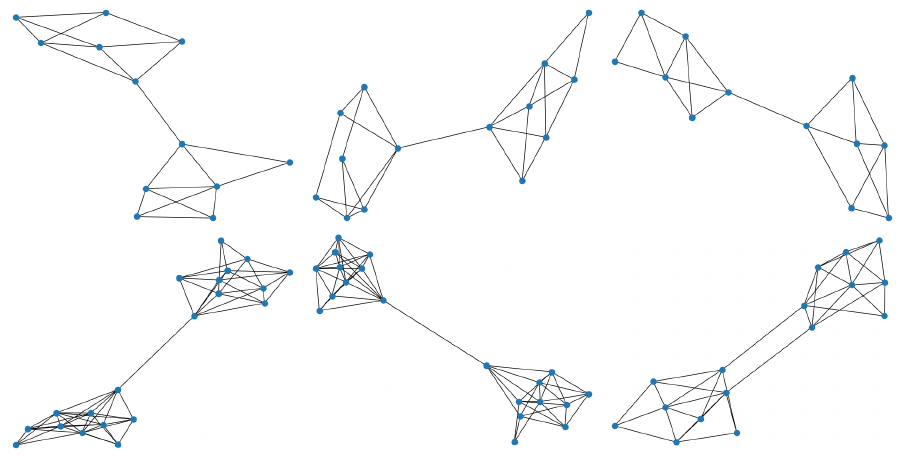}
         \caption{Real data}
         \label{fig:y equals x}
     \end{subfigure}
     \hfill
     \begin{subfigure}[b]{0.275\textwidth}
         \centering
         \includegraphics[width=\textwidth]{../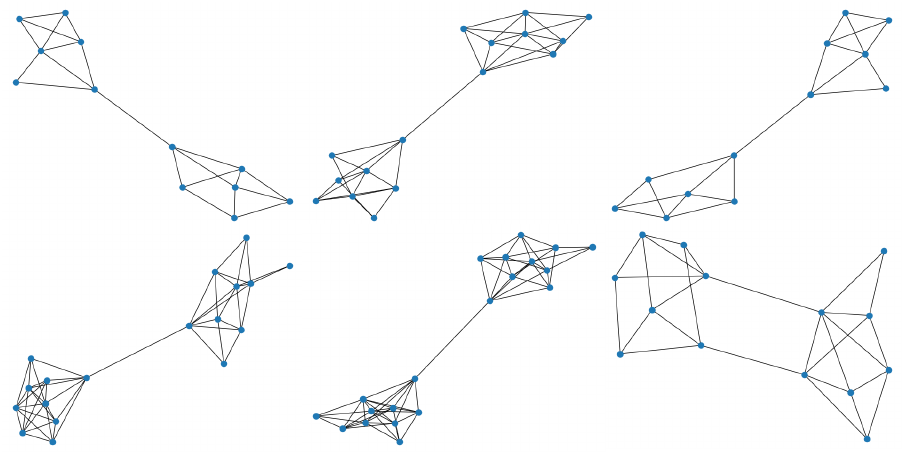}
         \caption{GSDM samples}
         \label{fig:three sin x}
     \end{subfigure}
     \hfill
     \begin{subfigure}[b]{0.275\textwidth}
         \centering
         \includegraphics[width=\textwidth]{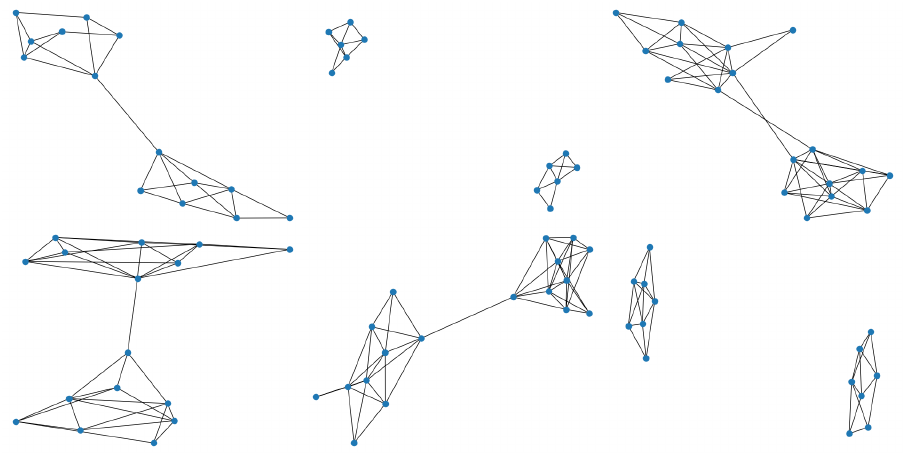}
         \caption{GDSS samples}
         \label{fig:five over x}
     \end{subfigure}
    \caption{Non-cherry-picked random samples from the testing set as well as samples generated by GSDM (ours) and GDSS \cite{jo2022score}, on Grid (top row) and Community-small (bottom row) datasets. For GDSS, we use the authors' released code and checkpoints to generate the samples.}
    \label{fig:gen}
\end{figure*}





Recently, a series of diffusion-based generative models have been proposed to overcome the limitations of likelihood-based models. Although being originally established for image generation \cite{song_ncsn}, diffusion models exhibit a great success in graph generation tasks with complex graph structural properties \cite{niu2020permutation,jo2022score}. Roughly speaking, diffusion refers to a Stochastic Differential Equation (SDE) that smoothly converts authentic data into pure noise via noise insertion. A diffusion model is able to reverse the data back from the noise, as long as the corresponding reversed-time SDE is known, which depends on the time-dependent gradient field of the density function (i.e. score function). To achieve this, a neural network is trained to learn the score function, and hence a reversed-time process can be constructed with the estimated score. The first graph diffusion model through SDEs, coined Graph Diffusion via SDE Systems (GDSS) \cite{jo2022score}, is designed to simultaneously generate node features and adjacency matrix via reversed diffusion. Similar to image diffusion models \cite{song_ncsn,song2020score}, at each diffusion step, GDSS directly inserts standard Gaussian noise to both node features and the adjacency matrix. Meanwhile, two separate neural networks are trained to learn the score functions of the node features and adjacency matrix, respectively. 

However, unlike the densely distributed image data, graph adjacency matrices can be highly sparse, which makes isotropic Gaussian noise insertion incompatible with graph structural data. In these circumstances, as shown in Figure \ref{fig:intro}, there is a stark difference between the diffusion process on images and on graph adjacency matrices. As can bee seen from the figure, the image corrupted by full-rank Gaussian noise exhibits recognizable numerical patterns along the early- and middle-stage of forward diffusion. However, the corrupted sparse graph adjacency matrix degenerates into a dense matrix with uniformly distributed entries in a few diffusion steps. In intuition, Figure \ref{fig:intro} implies that standard diffusion SDEs with full-rank isotropic noise insertion is destructive of learning graph topology and feature representations. Theoretically speaking, for extremely sparse graphs (e.g. molecules) with low-rank adjacency matrices, the adjacency score functions are supported on a low-dimensional manifold embedded in the full adjacency matrix space. Thus, directly applying diffusion models on graph topology generation is not desirable: once the diffusion SDE is run in the full space of the adjacency matrix, lethal noise will be injected into the out-of-support regions and drives the signal-to-noise ratio to be essentially zero, which is fatal for training score networks.

Even for densely connected graphs, the standard diffusion model is problematic for topology generation. Unlike image pixels that are merely locally correlated, an adjacency matrix governs the message-passing pattern of the whole graph. Thus, isotropic Gaussian noise insertion severely distorts the message-passing pattern, by blindly encouraging message passing on sparsely connected parts, which impedes the representation learning of sparse regions.









In order to establish a graph-friendly diffusion model, one should design an appropriate diffusion scheme that is compatible with the graph topology structure.
To this end, we propose the Graph Spectral Diffusion Model (GSDM), which is driven by diffusion SDEs on both the node feature space and the graph spectrum space. At each diffusion step, instead of corrupting the entire adjacency matrix, our method confines the Gaussian insertion to the graph spectrum space, i.e. the eigenvalue matrix of the adjacency matrix. This novel diffusion scheme enables us to perform smooth transformations on graph data during both the training and sampling phases. As illustrated in Figure \ref{fig:gen}, our proposed GSDM significantly outperforms the standard graph diffusion model (GDSS \cite{jo2022score}) in terms of graph generation quality and plausibility. For the Grid dataset shown in the top row of the figure, GDSS samples seem to be merely chaotic clusters, while GSDM samples exhibit smooth surface-like patterns that are visually similar to real data. For the Community-small dataset shown in the bottom row of the figure, GDSS fails to capture the link between two communities on some samples, while GSDM is able to capture dumbbell-like pattern (two clusters connected by one edge) as well as butterfly-like pattern (two clusters connected by two edges). This implies that GDSS's generation not only fails to mimic the observed topology distribution, but also suffers from capturing challenging details of the data such as links between communities. In contrast, GSDM is capable of generating high-quality graphs that are topologically similar to the real data, while retaining critical details.

We empirically evaluate the capability of our proposed GSDM on generic graph generation tasks, by evaluating the generation quality on both synthetic and real-world graph datasets. As shown in Section 4, GSDM outperforms existing one-shot generative models on various datasets, while achieving competitive performance to autoregressive models. Further molecule generation experiments show that our GSDM outperforms the state-of-the-art baselines, demonstrating that our proposed spectral diffusion model is capable of capturing complicated dependency between nodes and edges. Our main contributions are 3 folds:

\begin{itemize}
    \item We propose a novel Graph Spectral Diffusion Model (GSDM) for fast and quality graph generation. Our method overcomes the limitations of existing graph diffusion models by leveraging diffusion SDEs on both the node feature and graph spectrum spaces.

    \item Through the lens of stochastic analysis, we prove that GSDM enjoys a substantially stronger performance guarantee than the standard graph diffusion model. Our proposed spectral diffusion sharpens the reconstruction error bound from $\mathcal{O}(n^2\exp(n^2))$ to $\mathcal{O}(n\exp(n))$, where $n$ is the number of nodes.
    
    \item We evaluate GSDM on both synthetic and real-world graph generation tasks, on which GSDM outperforms all existing graph generative models. Moreover, GSDM also achieves evidently higher computational efficiency compared to existing graph diffusion models.

\end{itemize}
\section{Related Work}


From the view of graph generation strategies, graph generative models can be classified into two categories, sequential models and one-shot models. The sequential models, including GraphRNN \cite{you2018graphrnn}, GraphVAE \cite{simonovsky2018graphvae}, generate the nodes and edges in a sequential way, i.e. one after another \cite{guo2020systematic}, with validity checks among the generation steps. Models using a similar generation process are also known as autoregressive models. In contrast, one-shot generative models, e.g. GAN-based models \cite{de2018molgan}, VAE-based models \cite{ma2018constrained}, flow-based models \cite{zang2020moflow} and score-based models \cite{niu2020permutation,jo2022score}, learn the intrinsic dependency of graph structure by treating the distribution of all the components of a graph as a whole and therefore generate the entire graph in an integrative way. As a result, one-shot generative models enjoy the node permutation-invariant property. Moreover, one-shot models usually exhibit higher computational efficiency than autoregressive models. Our proposed GSDM generates data by reversing a spectral diffusion SDE with a learned spectral score function, thus it is essentially a score-based model. Since GSDM accepts the destination of the reversed diffusion SDE as the ultimately generated samples without any additional refinements, it can also be categorized as a one-shot model.




A recently proposed score-based model, GDSS \cite{jo2022score}, is the first and state-of-the-art diffusion-based generative model that simultaneously conducts nodes and edges generation. In essence, GDSS recasts the image diffusion paradigm \cite{song2020score} for graph generation. During the forward diffusion process, GDSS injects Gaussian noise to both the node features and the adjacency matrix at each diffusion step. Then, a neural network is trained to learn the score function by minimizing the score-matching objective, which enables a reversion of graph data from noise via a reversed time diffusion process. However, such a directly borrowed diffusion model is incompatible with graph topology generation: unlike images which are feature-rich, the graph adjacency matrix is  generally sparse and low-rank. Hence, injecting isotropic Gaussian noise into the sparsely connected parts of the adjacency matrix severely harms the graph data distribution and makes it hard to be recovered from Gaussian noise.



\section{Preliminaries}

\subsection{Notations}

In this paper, we denote the probability space of interest as $(\Omega, \mathcal{F}, \mathbb{P})$ and  $(\mathcal{F}_t)_{t\in \mathbb{R}}$ be a filtration, i.e. a sequence of increasing sub-$\sigma$-algebra of $\mathcal{F}$. Without specification, we denote $(\mathbf{B}_t)_{t\in \mathbb{R}}$ as the $d$-dimensional standard Brownian motion on the filtered probabilistic space $(\Omega, \mathcal{F}, \mathbb{P}, (\mathcal{F}_t)_{t\in \mathbb{R}})$. The distribution, support set and expectation of a random variable $\mathbf{z}$ are defined as $\mathrm{law}(\mathbf{z})$, $\mathrm{supp}(\mathbf{z})$ and $\mathbb{E}[\mathbf{z}]$. $\mathbf{y}|\mathbf{z}$ denotes the distribution of $\mathbf{y}$ conditioned on $\mathbf{z}$. $\mathrm{Unif}(A)$ denotes the uniform distribution on a set $A$. $\|\cdot \|$ denotes the standard Euclid norm. $\|\cdot\|_{\infty}$ and $\|\cdot\|_{\mathrm{lip}}$ denotes the supremum norm and Lipschitz norm of a function. $ [ \cdot ]$ denotes the flooring function.

\subsection{Score-based Generative Diffusion Model}

A generative model refers to a mapping $g_{\bm{\theta}}: \mathbb{R}^d\mapsto \mathbb{R}^d$, which maps a simple known priori $\pi$ to a complicated data distribution $\mathcal{D}$. Once the model $g_{\bm{\theta}}$ is sufficiently trained on $N$ i.i.d samples from $\mathcal{D}$, denoted by $\mathcal{S}$, it enables us to generate plausible instances from $\mathcal{D}$ directly, by sampling from $g_{\bm{\theta}}(\bm{\varepsilon}),\ \bm{\varepsilon} \sim \pi$. Unlike conventional generative models, e.g. VAE and GAN, which treat $\mathcal{D}$ as a unilateral transformation of $\pi$, diffusion models consider the bilateral relation between $\mathcal{D}$ and $\pi$ from the perspective of SDE. Given an SDE travelling from $\mathcal{D}$ to $\pi$, the corresponding reversed time SDE enables us to backtrack from noisy priori to the distribution of interest.

\begin{lemma}[Forward Diffusion and Reversed Time SDE \cite{reversed_time_sde_anderson}]\label{lem_reversed_sde}
    ~\\
    The Forward Diffusion refers to the following SDE
    \begin{align} 
        \mathbf{z}_0
        \sim
        \mathcal{D},\
        \mathrm{d} \mathbf{z}_t
        = 
        \mathbf{f}(\mathbf{z}_t, t) \mathrm{d}t
        +
        \sigma_t \mathrm{d}\mathbf{B}_t,\
        t\in [0,1], \label{fwd_sde}
    \end{align}
    where $\mathbf{f}(\cdot, t): \mathbb{R}^d \mapsto \mathbb{R}^d$ is the drift function, $\sigma_{t}: [0,1] \mapsto \mathbb{R}$ be a scalar diffusion function. Let $p_t(\cdot)$ be the probability density function of $\mathbf{z}_t$, then the Reversed Time SDE is given by 
    \begin{align}
    \label{reversed_sde}
        \mathrm{d} \bar{\mathbf{z}}_t
        = &
        (
            \mathbf{f}(\bar{\mathbf{z}}_t,t)
            -
            \sigma_t^2 \nabla \log p_t(\bar{\mathbf{z}}_t)
        )
        \mathrm{d} \bar{t}
        +
        \sigma_t \mathrm{d} \bar{\mathbf{B}}_t,
        \\
        \notag
        \bar{\mathbf{z}}_1 
        \sim &
        \mathbf{z}_1,\ 
        t \in [0,1],  
    \end{align}
    where $\mathrm{d}\bar{t}=-\mathrm{d}t$ is the negative infinitesimal time step,  $(\bar{\mathbf{B}}_t)_{t\in \mathbb{R}}$ is a reversed time Brownian motion w.r.t $(\Omega, \mathcal{F}, \mathbb{P}, (\bar{\mathcal{F}}_t)_{t\in \mathbb{R}})$, and $(\bar{\mathcal{F}}_t)_{t\in \mathbb{R}}$ is the corresponding decreasing filtration; $\nabla \log p_t(\cdot)$ is the score function.
\end{lemma}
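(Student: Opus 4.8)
The plan is to prove the identity by \emph{Fokker--Planck verification}: I will show that the marginal densities $\{p_t\}_{t\in[0,1]}$ of the forward diffusion \eqref{fwd_sde}, read in reverse time, solve the Kolmogorov forward equation of the candidate SDE \eqref{reversed_sde}, and then promote this marginal identity to an identity of laws via well-posedness of the associated martingale problem. Throughout I work under the standing regularity hypotheses of \cite{reversed_time_sde_anderson}: $\mathbf{f}$ and $\sigma_t$ smooth, the diffusion non-degenerate, $\mathcal{D}$ of finite second moment, so that $p_t$ exists, is $C^\infty$ and strictly positive and $\nabla\log p_t$ is well defined with the growth needed to make \eqref{reversed_sde} well posed.

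First I would record the Kolmogorov forward (Fokker--Planck) equation for \eqref{fwd_sde},
\[
  \partial_t p_t(x) = -\nabla\cdot\big(\mathbf{f}(x,t)\,p_t(x)\big) + \tfrac12\,\sigma_t^2\,\Delta p_t(x).
\]
Then I reparametrize time: set $\mathbf{y}_s := \bar{\mathbf{z}}_{1-s}$, $s\in[0,1]$. Tracking the sign of the time increment in \eqref{reversed_sde} carefully, $(\mathbf{y}_s)$ is a standard forward diffusion with drift $-\mathbf{f}(\cdot,1-s)+\sigma_{1-s}^2\nabla\log p_{1-s}(\cdot)$, diffusion coefficient $\sigma_{1-s}$, and initial law $\mathbf{y}_0\sim\mathbf{z}_1$. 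Writing down the Kolmogorov forward equation for $(\mathbf{y}_s)$, substituting the candidate $q_s:=p_{1-s}$, and using the pointwise identity $\nabla p_t = p_t\,\nabla\log p_t$ together with the displayed forward equation, a short computation shows both sides agree; hence $p_{1-s}$ solves the Kolmogorov equation of the reversed dynamics. Because we also impose the terminal law $\bar{\mathbf{z}}_1\sim\mathbf{z}_1$, i.e. $q_0=p_1$, uniqueness for this parabolic equation forces $q_s=p_{1-s}$ for every $s$, so $\mathrm{law}(\bar{\mathbf{z}}_t)=p_t=\mathrm{law}(\mathbf{z}_t)$ for all $t\in[0,1]$.

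To upgrade the coincidence of marginals to the statement that $(\bar{\mathbf{z}}_t)$ is \emph{the} time reversal of $(\mathbf{z}_t)$ — all finite-dimensional distributions matching — I would identify the backward infinitesimal generator directly. For $\phi\in C_c^\infty(\mathbb{R}^d)$, apply Bayes' rule to the forward transition kernel $p_{t\mid t-h}$, expand $\mathbb{E}[\phi(\bar{\mathbf{z}}_{t-h})\mid\bar{\mathbf{z}}_t=x]$ to first order in $h$, and integrate by parts; the term $\sigma_t^2\nabla\log p_t$ appears precisely from differentiating $\log p_{t-h}$ in the Bayes formula, and one reads off backward drift $\mathbf{f}(x,t)-\sigma_t^2\nabla\log p_t(x)$ and diffusion $\sigma_t$. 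Thus $(\bar{\mathbf{z}}_t)$ solves the martingale problem for \eqref{reversed_sde}, which under the regularity above is well posed and so determines the law uniquely. Equivalently, one may simply invoke the time-reversal theorems of Anderson and of Haussmann--Pardoux, whose hypotheses are exactly the conditions assumed here.

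The main obstacle is not the algebra but the \emph{regularity bookkeeping}: guaranteeing that $p_t$ is smooth and strictly positive (so $\nabla\log p_t$ makes sense), that $\mathbf{f}$ and $\nabla\log p_t$ satisfy the growth and integrability bounds making the reversed SDE well posed and the Kolmogorov equation uniquely solvable, and that the boundary terms in the integration by parts vanish at infinity. Under the assumptions inherited from \cite{reversed_time_sde_anderson} all of these hold and the formal computation above becomes rigorous.
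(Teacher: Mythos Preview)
Your proposal is correct and follows the classical route: verify that the time-reversed marginals $p_{1-s}$ satisfy the Fokker--Planck equation of the candidate reversed SDE, then upgrade to equality in law via well-posedness of the martingale problem (or equivalently, invoke Anderson / Haussmann--Pardoux directly). This is precisely the content of \cite{reversed_time_sde_anderson}.

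Note, however, that the paper does \emph{not} supply its own proof of this lemma. It is stated as a background result in the preliminaries with a citation to Anderson, and is used as a black box throughout; the appendix proofs begin only with Proposition~\ref{Prop_spectral_SDE_adj}. So there is nothing to compare against: your sketch is the standard argument behind the cited reference, and the regularity caveats you flag (positivity and smoothness of $p_t$, growth of $\nabla\log p_t$, vanishing boundary terms) are exactly the hypotheses under which Anderson's theorem is stated.
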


During the forward diffusion process, with a carefully designed $\mathbf{f}(\cdot, t)$, the original data is perturbed by Gaussian noise with increasing magnitude, and it is assumed to be gradually corrupted to a truly noisy signal (priori), i.e. $\mathrm{law}(\mathbf{z}_1)=\pi$. In order to draw new data from $\pi$ via the reversed time SDE, one needs to learn the unknown score function $\nabla \log p_t(\cdot)$ with a neural network $s_{\bm{\theta}}(\cdot): \mathbb{R}^d \mapsto \mathbb{R}^d$, by minimizing the following explicit score matching error:
\begin{align}
\label{sm_objective}
    \mathcal{E}(\bm{\theta})
    \triangleq
    \E_{
    \mathbf{z}\sim \mathcal{D}
    } 
    \E_{\mathbf{z}_t|\mathbf{z}}
    \|
        s_{\bm{\theta}}(\mathbf{z}_t)
        -
        \nabla 
        \log p_t(\mathbf{z}_t) 
    \|^2.
\end{align}
In practice, we employ a Gaussian priori $\pi$. For each sample $\mathbf{z}^i\in \mathcal{S}$, we first generate a sequence of corrupted data $\{ \mathbf{z}^i_{t_j}\}_{j=1}^T$ by discretizing \eqref{fwd_sde}. To learn the score network $s_{\bm{\theta}}(\cdot)$, one can minimize a more tractable denoising score matching objective $\hat{\mathcal{E}}(\bm{\theta})$ 
\begin{align}
    \hat{\mathcal{E}}(\bm{\theta})
    \triangleq
    \E_{\mathbf{z}\sim \mathrm{Unif}(\mathcal{S})}
    \E_{\mathbf{z}_t | \mathbf{z}}
    \|
        s_{\bm{\theta}}(\mathbf{z}_t)
        -
        \nabla \log p(\mathbf{z}_t|\mathbf{z})
    \|^2,
    \label{emp_sm_objective}
\end{align}
which has been proven to be equivalent to $\mathcal{E}(\bm{\theta})$ in \cite{score_matching_tech}. Given a well trained score network $s_{\bm{\theta}^*}(\cdot)$, one is able to generate plausible data from $\pi$ via solving the learned reversed-time SDE 
\begin{align}
    \mathrm{d} \hat{\mathbf{z}}_t
    = &
    (
        \mathbf{f}(\hat{\mathbf{z}}_t,t)
        -
        \sigma_t^2 s_{\bm{\theta}^*}(\hat{\mathbf{z}}_t)
    )
    \mathrm{d} \bar{t}
    +
    \sigma_t \mathrm{d} \bar{\mathbf{B}}_t,
    \\
    \notag
    \hat{\mathbf{z}}_1 
    \sim &
    \pi,\
    t \in [0,1].
\end{align}
Ideally, the learned reversed time SDE should lead us towards $\mathcal{D}$, i.e. $\mathrm{law}(\hat{\mathbf{z}}_0) = \mathcal{D}$.

\section{Methodology}

In this section, we establish the Graph Spectral Diffusion Model (GSDM) for fast and effective graph data generation. In Section 4.1, we briefly review the standard score-based graph diffusion model \cite{jo2022score}. In Section 4.2, we formally introduce our GSDM algorithm and its $\alpha$-quantile variants. In Section 4.3, we provide theoretical analyses to justify the efficacy of GSDM on graph data generation. 

\subsection{Standard Graph Diffusion Model}

A graph with $n$ nodes is defined as $\mathbf{G} \triangleq (\mathbf{X}, \mathbf{A}) \in \R^{n\times d} \times \R^{n\times n}$, where $\mathbf{X} \in \R^{n \times d}$ is the node feature matrix with dimension $d$ and $\mathbf{A} \in \R^{n\times n}$ denotes the adjacency matrix. A graph generative model aims to learn the underlying data distribution, say $\mathbf{G} \sim \mathcal{G}$, which is a joint distribution of both $\mathbf{X}$ and $\mathbf{A}$. Note that, if $(\mathbf{X}, \mathbf{A})$ is treated as a whole and omit the intrinsic graph structure, the aforementioned score-based generation framework can be parallelly extended to the graph generation setting, which yields the standard graph diffusion model, i.e. GDSS \cite{jo2022score}. Roughly speaking, for each graph sample $(\mathbf{X},\mathbf{A})$, we first generate a sequence of perturbed graphs $\{(\mathbf{X}_{t_i}, \mathbf{A}_{t_i})\}_{i=1}^T$ via forward diffusion. Then, we train two score networks $s_{\bm{\theta}}(\cdot)$ and $s_{\bm{\phi}} (\cdot)$ to learn the score functions for both $\mathbf{X}_t$ and $\mathbf{A}_t$, with which we can generate new data from $\pi$ by running the reversed time SDE. Graph diffusion SDEs are defined as follows.

\begin{definition}[Graph Diffusion SDEs with Disentangled Drift]
~\\
    The Forward Graph Diffusion refers to the following SDE system
    \begin{align}
        &
        \label{graph_fwd_sde}
        \begin{cases}
            \mathrm{d} \mathbf{X}_t
            = &
            \mathbf{f}^X(\mathbf{X}_t, t) \mathrm{d}t
            +
            \sigma_{X,t} \mathrm{d} \mathbf{B}_t^X,
            \\
            \mathrm{d} \mathbf{A}_t
            = &
            \mathbf{f}^A(\mathbf{A}_t, t) \mathrm{d}t
            +
            \sigma_{A,t} \mathrm{d} \mathbf{B}_t^A,
        \end{cases}
        \\
        \notag
        & (\mathbf{X}_0, \mathbf{A}_0)
        \sim 
        \mathcal{G},\
        t\in [0,1],
    \end{align}
    where $\mathbf{f}^X(\cdot,t): \R^{n\times d} \mapsto \R^{n\times d}$ and $\mathbf{f}^A(\cdot,t): \R^{n\times n} \mapsto \R^{n\times n}$ is the drift functions for nodes feature and adjacency matrix; $\sigma_{X,t}, \sigma_{A,t}$ are the scalar diffusion terms (a.k.a noise schedule function); $(\mathbf{B}_t^X)_{t\in\R}$ and $(\mathbf{B}_t^A)_{t\in\R}$ are standard Brownian motions on $\mathbb{R}^{n\times d}$ and $\R^{n\times n}$, respectively.
\end{definition}

To alleviate the computational burden of calculating the drift w.r.t the high dimensional $\mathbf{G}$, the drift term of forward graph diffusion is disentangled into $\mathbf{f}^X(\cdot,t)$ and $\mathbf{f}^A(\cdot,t)$. Again, Lemma \ref{lem_reversed_sde} guarantees the existence of the reversed time SDEs for graph diffusion.

\begin{corollary}[Reversed Time SDEs for Graph Diffusion]
    The reversed time SDE system of \eqref{graph_fwd_sde} is given by
    \small
    \begin{align}
        &
        \notag
        \begin{cases}
            \mathrm{d} \bar{\mathbf{X}}_t
            = &
            \left(
                \mathbf{f}^X( \bar{\mathbf{X}}_t, t) 
                -
                \sigma_{X,t}^2
                \nabla_{\mathbf{X}}
                \log 
                p_t(\bar{\mathbf{X}}_t, \bar{\mathbf{A}}_t)
            \right)
            \mathrm{d}\bar{t}
            +
            \sigma_{X,t} \mathrm{d} \bar{\mathbf{B}}_t^X,
            \\
            \mathrm{d} \bar{\mathbf{A}}_t
            = &
            \left(
                \mathbf{f}^A( \bar{\mathbf{A}}_t, t) 
                -
                \sigma_{A,t}^2
                \nabla_{\mathbf{A}}
                \log 
                p_t(\bar{\mathbf{X}}_t, \bar{\mathbf{A}}_t)
            \right)
            \mathrm{d}\bar{t}
            +
            \sigma_{A,t} \mathrm{d} \bar{\mathbf{B}}_t^A,
        \end{cases}
        \\
        \label{graph_reversed_time_sde}
        &(\bar{\mathbf{X}}_1, \bar{\mathbf{A}}_1)
        \sim 
        \pi,\
        t \in [0,1],
    \end{align}
    \normalsize
    where $\mathrm{d}\bar{t}=-\mathrm{d}t$ is the negative infinitesimal time step; $(\bar{\mathbf{B}}_t^X)_{t\in\R}$ and $(\bar{\mathbf{B}}_t^A)_{t\in\R}$ are the reversed time standard Brownian motions induced by \eqref{graph_fwd_sde}.
\end{corollary}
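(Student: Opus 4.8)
The plan is to fold the disentangled forward system \eqref{graph_fwd_sde} into a single vector-valued forward diffusion of the type treated in Lemma \ref{lem_reversed_sde}, apply that lemma, and then unfold the resulting reversed-time dynamics back into the $\mathbf{X}$- and $\mathbf{A}$-blocks. Because the drift is already disentangled and each noise schedule acts as a scalar on its block, the only real work is bookkeeping; I do not expect a genuine obstacle beyond one mild generalization of Lemma \ref{lem_reversed_sde}, discussed at the end.

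First I would vectorize the graph state. Let $\mathbf{z}_t \triangleq (\mathrm{vec}(\mathbf{X}_t), \mathrm{vec}(\mathbf{A}_t)) \in \R^{nd+n^2}$, let $\mathbf{f}(\mathbf{z}_t,t) \triangleq (\mathrm{vec}(\mathbf{f}^X(\mathbf{X}_t,t)), \mathrm{vec}(\mathbf{f}^A(\mathbf{A}_t,t)))$, and let $\mathbf{G}_t \triangleq \mathrm{diag}(\sigma_{X,t}\mathbf{I}_{nd},\, \sigma_{A,t}\mathbf{I}_{n^2})$. Stacking the (independent) Brownian motions $\mathbf{B}^X_t$ and $\mathbf{B}^A_t$ into a single standard Brownian motion $\mathbf{B}_t$ on $\R^{nd+n^2}$, the two equations of \eqref{graph_fwd_sde} are jointly equivalent to $\mathrm{d}\mathbf{z}_t = \mathbf{f}(\mathbf{z}_t,t)\,\mathrm{d}t + \mathbf{G}_t\,\mathrm{d}\mathbf{B}_t$ with $\mathbf{z}_0$ distributed as the vectorized $\mathcal{G}$ — that is, a forward diffusion in the sense of \eqref{fwd_sde}, except that the scalar $\sigma_t$ is replaced by the state-independent diffusion matrix $\mathbf{G}_t$.

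Next I would apply the reversed-time SDE result. Since $\mathbf{G}_t$ does not depend on $\mathbf{z}_t$, the divergence term $\nabla\!\cdot(\mathbf{G}_t\mathbf{G}_t^\top)$ appearing in the general (matrix-coefficient) form of Anderson's theorem vanishes, so the reverse dynamics read $\mathrm{d}\bar{\mathbf{z}}_t = \big(\mathbf{f}(\bar{\mathbf{z}}_t,t) - \mathbf{G}_t\mathbf{G}_t^\top\,\nabla\log p_t(\bar{\mathbf{z}}_t)\big)\mathrm{d}\bar t + \mathbf{G}_t\,\mathrm{d}\bar{\mathbf{B}}_t$, with $\bar{\mathbf{z}}_1 \sim \mathbf{z}_1 \sim \pi$, where $p_t$ is the joint density of $(\mathrm{vec}(\mathbf{X}_t),\mathrm{vec}(\mathbf{A}_t))$. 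Now $\mathbf{G}_t\mathbf{G}_t^\top = \mathrm{diag}(\sigma_{X,t}^2\mathbf{I}_{nd},\, \sigma_{A,t}^2\mathbf{I}_{n^2})$ is block-diagonal and $\nabla\log p_t(\bar{\mathbf{z}}_t)$ splits blockwise into $(\nabla_{\mathbf{X}}\log p_t,\, \nabla_{\mathbf{A}}\log p_t)$, so reading off the first $nd$ and the last $n^2$ coordinates separately and de-vectorizing yields exactly \eqref{graph_reversed_time_sde}, the two reversed Brownian blocks being identified with $\bar{\mathbf{B}}^X_t$ and $\bar{\mathbf{B}}^A_t$. I would emphasize that one cannot simply reverse each equation on its own: the forward $\mathbf{X}$- and $\mathbf{A}$-dynamics are decoupled, but the initial law $\mathcal{G}$ is joint, so in general $p_t$ does not factorize and the joint score legitimately appears in both reversed equations.

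The one point needing care is that Lemma \ref{lem_reversed_sde} is stated with a scalar diffusion function, whereas here $\sigma_{X,t}$ and $\sigma_{A,t}$ may differ and hence $\mathbf{G}_t$ is genuinely matrix-valued; I would therefore invoke the matrix-coefficient form of \cite{reversed_time_sde_anderson}, for which — as used above — the state-independence of $\mathbf{G}_t$ is precisely what annihilates the correction term. The remaining hypotheses of that theorem (Lipschitz-in-state drifts $\mathbf{f}^X,\mathbf{f}^A$, finite second moments, and smoothness and positivity of $p_t$ for $t>0$) are standard and hold for the affine drifts and Gaussian-convolved densities used in graph diffusion; verifying them is routine.
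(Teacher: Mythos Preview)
Your proposal is correct and takes essentially the same approach as the paper: the paper simply states that ``Lemma~\ref{lem_reversed_sde} guarantees the existence of the reversed time SDEs for graph diffusion'' and then records the corollary without a separate proof, so your stacking-into-a-single-vector argument is exactly the intended (and only natural) route. Your remarks on needing the matrix-coefficient form of Anderson's theorem when $\sigma_{X,t}\neq\sigma_{A,t}$, and on the joint score appearing because $\mathcal{G}$ couples $\mathbf{X}_0$ and $\mathbf{A}_0$, are both accurate refinements that the paper leaves implicit.
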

 
The disentanglement of the drift functions implies the conditional independence $\mathbf{X}_t \perp \mathbf{A}_t | \mathbf{G}_0$, with which we can decompose $p_t(\bar{\mathbf{X}}_t, \bar{\mathbf{A}}_t)$ into $p_{t|0}(\mathbf{X}_t|\mathbf{X}_0) \cdot p_{t|0}(\mathbf{A}_t|\mathbf{A}_0)$, where $p_{t|0}(\cdot)$ denotes the density function of $\mathbf{G}_t|\mathbf{G_0}$. As proposed in \cite{jo2022score}, such conditional independence reduces the denoising score matching objective to a simpler form 
\small
\begin{align}
    \widehat{\mathcal{E}}(\bm{\theta})
    \triangleq &
    \E_{\mathbf{G}\sim \mathrm{Unif}(\mathcal{S})}
    \E_{\mathbf{G}_t | \mathbf{G}}
    \|
        s_{\bm{\theta}}(\mathbf{G}_t)
        -
        \nabla \log p_{t|0}(\mathbf{X}_t|\mathbf{X}_0)
    \|^2,
    \\
    \widehat{\mathcal{E}}(\bm{\phi})
    \triangleq &
    \E_{\mathbf{G}\sim \mathrm{Unif}(\mathcal{S})}
    \E_{\mathbf{G}_t | \mathbf{G}}
    \|
        s_{\bm{\phi}}(\mathbf{G}_t)
        -
        \nabla \log p_{t|0}(\mathbf{A}_t|\mathbf{A}_0)
    \|^2.
\end{align}
\normalsize
Hence, the training and sampling procedures can be directly borrowed from standard score-based models.

While GDSS is the first attempt at leveraging diffusion models on graph generation, its performance is hindered by the brute force application of diffusion. For sparsely connected graphs, while the distribution of node features varies across datasets, the distribution of graph topology, i.e. adjacency matrix, resides in a low dimensional manifold. As mentioned in Section 1, an evident pattern of the adjacency matrices also implies that the true distribution of $\mathbf{A}$ is of low rank. In this case, the score matching objective fails to provide consistency estimators. Although running a full rank diffusion on $\mathbf{A} \in \R^{n\times n}$ alleviates this issue by extending the support of corrupted data from the manifold to the full space, it inevitably introduces lethal noise to regions of zero probability density. As a consequence, the signal-to-noise ratio of regions out of $\mathrm{supp}(\mathbf{A})$ is essentially zero, which is a catastrophe for training the denoising score network. Note that such full-rank diffusion is also inappropriate for the densely connected graph generation. This is because an isotropically corrupted adjacency matrix encourages delusive message passing on sparsely connected parts of the graph, which is destructive to the graph message passing pattern. Thus, standard diffusion can severely impair representation learning for sparse graph regions.

\subsection{Graph Spectral Diffusion Model}

To address these notorious yet ubiquitous issues, we novelly propose the Graph Spectral Diffusion Model. For graph topology generation, in contrast to GDSS which is driven by a full-rank diffusion on the whole space $\R^{n\times n}$, our GSDM leverages low-rank diffusion SDEs on the $n$-dimensional spectrum manifold, e.g. the span of $n$-eigenvalues of $\mathbf{A}$. As we shall see later, GSDM achieves both robustness and computational efficiency, by exploiting the graph spectrum structure and running diffusion on an information-concentrated manifold. 
\begin{definition}[Graph Spectral Diffusion SDEs]
~\\
    Let the spectral decomposition of $\mathbf{A}$ be $\mathbf{U} \bm{\Lambda} \mathbf{U}^\top$, where columns of $\mathbf{U}$ are the orthonormal eigenvectors and $\bm{\Lambda}$ be the diagonal eigenvalue matrix, i.e. spectrum.
    The Forward Spectral Diffusion refers to the following SDE system
    \begin{align}
        &
        \label{spectral_fwd_sde}
        \begin{cases}
            \mathrm{d} \mathbf{X}_t
            = &
            \mathbf{f}^X(\mathbf{X}_t, t) \mathrm{d}t
            +
            \sigma_{X,t} \mathrm{d} \mathbf{B}_t^X,
            \\
            \mathrm{d} \bm{\Lambda}_t
            = &
            \mathbf{f}^{\Lambda}(\bm{\Lambda}_t, t) \mathrm{d}t
            +
            \sigma_{\Lambda,t} \mathrm{d} \mathbf{W}_t^{\Lambda},
        \end{cases}
        \\
        \notag
        & (\mathbf{X}_0, \mathbf{A}_0)
        \sim 
        \mathcal{G},\
        \mathbf{A}_0
        =
        \mathbf{U}_0
        \bm{\Lambda}_0
        \mathbf{U}_0^\top,\
        t\in [0,1],
    \end{align}
    where $\mathbf{f}^X(\cdot,t): \R^{n\times d} \mapsto \R^{n\times d}$ is the drift for nodes feature; $\mathbf{f}^{\Lambda}(\cdot,t): \R^{n} \mapsto \R^{n}$ is the drift for spectrum, which only acts on the diagonal entries; $\sigma_{X,t}, \sigma_{\Lambda,t}$ are the scalar diffusion terms (a.k.a noise schedule functions); $(\mathbf{B}_t^X)_{t\in\R}$ and $(\mathbf{B}_t^{\Lambda})_{t\in\R}$ are standard Brownian motions on $\mathbb{R}^{n\times d}$ and $\R^{n}$, respectively; and $\mathbf{W}_t^{\Lambda} \triangleq \mathrm{diag}(\mathbf{B}_t^{\Lambda})$ is a diagonal Brownian motion.
\end{definition}

By comparing \eqref{spectral_fwd_sde} with \eqref{graph_fwd_sde}, one should notice that the initial eigenvector matrix $\mathbf{U}_0$ is fixed along the spectral diffusion. As will be illustrated later, the evolution of $\mathbf{A}_t \triangleq \mathbf{U}_0 \bm{\Lambda}_t \mathbf{U}_0^\top$ is driven by a $n$-dimensional Gaussian process. Hence, we prevent the corrupted adjacency matrix from rampaging around the full space. We are now ready to establish the reversed time spectral diffusion SDEs.
\begin{corollary}[Reversed Time Spectral Diffusion SDEs]
    The reversed time Spectral Diffusion SDE system of \eqref{spectral_fwd_sde} is given by
    \small
    \begin{align}
        &
        \notag
        \begin{cases}
            \mathrm{d} \bar{\mathbf{X}}_t
            = &
            \left(
                \mathbf{f}^X( \bar{\mathbf{X}}_t, t) 
                -
                \sigma_{X,t}^2
                \nabla_{\mathbf{X}}
                \log 
                p_t(\bar{\mathbf{X}}_t, \bar{\bm{\Lambda}}_t)
            \right)
            \mathrm{d}\bar{t}
            +
            \sigma_{X,t} \mathrm{d} \bar{\mathbf{B}}_t^X,
            \\
            \mathrm{d} \bar{\bm{\Lambda}}_t
            = &
            \left(
                \mathbf{f}^{\Lambda}( \bar{\bm{\Lambda}}_t, t) 
                -
                \sigma_{\Lambda,t}^2
                \nabla_{\bm{\Lambda}}
                \log 
                p_t(\bar{\mathbf{X}}_t, \bar{\bm{\Lambda}}_t)
            \right)
            \mathrm{d}\bar{t}
            +
            \sigma_{\Lambda,t} \mathrm{d} \bar{\mathbf{W}}_t^{\Lambda},
        \end{cases}
        \\
        \label{spectral_reversed_time_sde}
        &(\bar{\mathbf{X}}_1, \bar{\bm{\Lambda}}_1)
        \sim 
        \pi,\
        t \in [0,1],
    \end{align}
    \normalsize
    where $\mathrm{d}\bar{t}=-\mathrm{d}t$ is the negative infinitesimal time step; $(\bar{\mathbf{B}}_t^X)_{t\in\R}$ and $(\bar{\mathbf{B}}_t^{\Lambda})_{t\in\R}$ are reversed time standard Brownian motions induced by \eqref{graph_fwd_sde}; and $\bar{\mathbf{W}}_t^{\Lambda} \triangleq \mathrm{diag}(\bar{\mathbf{B}}_t^{\Lambda})$.
\end{corollary}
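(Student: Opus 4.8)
The plan is to obtain \eqref{spectral_reversed_time_sde} by reducing the forward spectral diffusion \eqref{spectral_fwd_sde} to a single Itô SDE on a Euclidean space and then invoking Lemma~\ref{lem_reversed_sde}. First I would collapse the spectral component: since $\mathbf{U}_0$ is held fixed and both the drift $\mathbf{f}^{\Lambda}$ and the noise $\mathbf{W}_t^{\Lambda}=\mathrm{diag}(\mathbf{B}_t^{\Lambda})$ act only on the $n$ diagonal entries, $\bm{\Lambda}_t$ stays diagonal for all $t$ almost surely, so I identify it with the vector $\bm{\lambda}_t\in\R^{n}$ of its diagonal entries. Stacking with $\mathrm{vec}(\mathbf{X}_t)$, set $\mathbf{z}_t\triangleq(\mathrm{vec}(\mathbf{X}_t),\bm{\lambda}_t)\in\R^{nd+n}$; then \eqref{spectral_fwd_sde} is exactly a forward diffusion of the form \eqref{fwd_sde} with block drift $\mathbf{f}(\mathbf{z},t)=(\mathbf{f}^X(\mathbf{X},t),\mathbf{f}^{\Lambda}(\bm{\lambda},t))$ and state-independent block-diagonal diffusion coefficient $g_t=\mathrm{diag}(\sigma_{X,t}\mathbf{I}_{nd},\sigma_{\Lambda,t}\mathbf{I}_{n})$, and $\mathrm{law}(\mathbf{z}_1)=\pi$.

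Next I would apply the time reversal. Lemma~\ref{lem_reversed_sde} is stated with a single scalar $\sigma_t$, so I would invoke the (equally classical) Anderson reversal for a state-independent matrix diffusion coefficient $g_t$: the reversed drift is $\mathbf{f}(\mathbf{z},t)-g_tg_t^{\top}\nabla\log p_t(\mathbf{z})$ and the reversed noise is $g_t\,\mathrm{d}\bar{\mathbf{B}}_t$. Because $g_tg_t^{\top}=\mathrm{diag}(\sigma_{X,t}^{2}\mathbf{I}_{nd},\sigma_{\Lambda,t}^{2}\mathbf{I}_{n})$, the reversed drift decouples blockwise into $\mathbf{f}^X(\bar{\mathbf{X}}_t,t)-\sigma_{X,t}^{2}\nabla_{\mathbf{X}}\log p_t$ and $\mathbf{f}^{\Lambda}(\bar{\bm{\lambda}}_t,t)-\sigma_{\Lambda,t}^{2}\nabla_{\bm{\lambda}}\log p_t$, driven by independent reversed Brownian motions $\bar{\mathbf{B}}_t^{X}$ and $\bar{\mathbf{B}}_t^{\Lambda}$. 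Re-embedding the vector equation for $\bar{\bm{\lambda}}_t$ into diagonal matrices via $\mathrm{diag}(\cdot)$, noting that $\nabla_{\bm{\lambda}}\log p_t$ is precisely $\nabla_{\bm{\Lambda}}\log p_t$ restricted to the diagonal (consistent with the definition of $\mathbf{f}^{\Lambda}$), and writing $\bar{\mathbf{W}}_t^{\Lambda}\triangleq\mathrm{diag}(\bar{\mathbf{B}}_t^{\Lambda})$, yields exactly \eqref{spectral_reversed_time_sde}, with terminal law $(\bar{\mathbf{X}}_1,\bar{\bm{\Lambda}}_1)\sim\pi$ inherited from $\mathrm{law}(\mathbf{z}_1)=\pi$. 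I would also remark in passing that the disentangled drift makes $\mathbf{X}_t$ and $\bm{\Lambda}_t$ conditionally independent given $\mathbf{G}_0$, so $p_t$ factorizes as needed for the later score-matching objective; this is not required for the reversal identity itself, which only involves the joint marginal density $p_t$.

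The main obstacle I expect is not the algebra but ensuring the reversal theorem is legitimately applicable \emph{in the right coordinates}. Viewed naively inside $\R^{n\times n}$, the spectral forward SDE is a degenerate diffusion: the off-diagonal coordinates of $\bm{\Lambda}_t$ never move, so $\mathrm{law}(\bm{\Lambda}_t)$ is supported on the $n$-dimensional subspace of diagonal matrices and has no density there, and Lemma~\ref{lem_reversed_sde} cannot be quoted as stated. The fix, and the step that needs care, is to restrict to the $n$ genuinely moving coordinates where the diffusion is non-degenerate, verify there the standing hypotheses behind Anderson's theorem (Lipschitz/linear-growth drifts, $\sigma_{\Lambda,t},\sigma_{X,t}>0$ and smooth in $t$, finite second moments, and enough smoothness and integrability of $p_t$ for $\nabla\log p_t$ to be well defined along the trajectory), and only then lift the resulting per-eigenvalue reversed SDE back to diagonal-matrix form.
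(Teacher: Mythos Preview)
Your proposal is correct and is essentially the approach the paper takes, only spelled out in far more detail. The paper does not give a standalone proof of this corollary; it is stated immediately after the forward spectral SDE and is treated as a direct consequence of Lemma~\ref{lem_reversed_sde} (Anderson's reversal), just as Corollary~1 is introduced with the sentence ``Again, Lemma~\ref{lem_reversed_sde} guarantees the existence of the reversed time SDEs for graph diffusion.'' Your reduction---identifying $\bm{\Lambda}_t$ with its diagonal vector in $\R^n$, stacking with $\mathrm{vec}(\mathbf{X}_t)$, and applying Anderson with a block-diagonal state-independent diffusion matrix---is exactly the rigorous content behind that one-line invocation, and your observation that Lemma~\ref{lem_reversed_sde} as stated has a scalar $\sigma_t$ (so one must cite the matrix-coefficient version of Anderson) is a genuine gap in the paper's presentation that you have filled. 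The degeneracy concern you raise about working in $\R^{n\times n}$ versus the $n$ live coordinates is likewise something the paper glosses over entirely.
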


Since $\mathbf{U}$ is no longer involved in \eqref{spectral_reversed_time_sde}, the boundary condition
is only imposed on the joint distribution of $(\mathbf{X}_1,\bm{\Lambda}_1)$ such that $\mathrm{law}(\mathbf{X}_1,\bm{\Lambda}_1)=\pi$. This assumption implies that, the authentic distribution $(\mathbf{X}_0,\bm{\Lambda}_0)$ can be recovered from a priori $\pi$ by the reversed time spectral diffusion SDE. According to score matching techniques \cite{score_matching_tech}, we can train two score networks $s_{\bm{\theta}}(\cdot,\cdot), s_{\bm{\phi}}(\cdot,\cdot)$ to learn the score functions $\nabla_{\mathbf{X}} \log p_t(\cdot, \cdot), \nabla_{\bm{\Lambda}} \log p_t(\cdot, \cdot)$ via minimizing 
\small
\begin{align}
\notag
    \widehat{\mathcal{E}}(\bm{\theta})
    \triangleq &
    \E_{\mathbf{G}\sim \mathrm{Unif}(\mathcal{S})}
    \E_{\mathbf{X}_t | \mathbf{G}}
    \|
        s_{\bm{\theta}}(\mathbf{X}_t, \bm{\Lambda}_t)
        -
        \nabla \log p_{t|0}(\mathbf{X}_t|\mathbf{X}_0)
    \|^2,
    \\
\notag
    \widehat{\mathcal{E}}(\bm{\phi})
    \triangleq &
    \E_{\mathbf{G}\sim \mathrm{Unif}(\mathcal{S})}
    \E_{\bm{\Lambda}_t | \mathbf{G}}
    \|
        s_{\bm{\phi}}(\mathbf{X}_t, \bm{\Lambda}_t)
        -
        \nabla \log p_{t|0}(\bm{\Lambda}_t|\bm{\Lambda}_0)
    \|^2.
\end{align}
\normalsize

In nutshell, the proposed GSDM is summarized as three main steps. Details of GSDM can be found Algorithm \ref{alg_train} and Algorithm \ref{alg_sample} in Appendix~\ref{algorithm}.
\begin{itemize}
    \item[1.] Run forward spectral diffusion model on $(\mathbf{X}, \bm{\Lambda})$ by \eqref{spectral_fwd_sde}. Train two score networks $s_{\bm{\theta}}(\cdot), s_{\bm{\phi}}(\cdot)$ to learn score functions shown up in \eqref{spectral_reversed_time_sde}.
    \item[2.] Generate plausible $(\widehat{\mathbf{X}}_0,\widehat{\bm{\Lambda}}_0)$ from $\pi$, via  reversing the spectral diffusion SDEs from $t=1$ to $t=0$, with estimated score functions $s_{\bm{\theta}}(\widehat{\mathbf{X}}_t,\widehat{\bm{\Lambda}}_t),s_{\bm{\phi}}(\widehat{\mathbf{X}}_t,\widehat{\bm{\Lambda}}_t)$.  
    \item[3.] Generate plausible adjacency matrix via $\widehat{\mathbf{A}} = \widehat{\mathbf{U}} \widehat{\bm{\Lambda}}_0 \widehat{\mathbf{U}}^\top$, where $\widehat{\mathbf{U}}$ is uniformly sampled from the observed eigenvector matrices. 
\end{itemize}

Since the full adjacency matrices are not involved in the computation of diffusion SDEs, GSDM achieves significant acceleration in both training and sampling. 
Moreover, one can further enhance the computational efficiency, by confining the spectral diffusion to the top-$k$ largest eigenvalues of $\mathbf{A}$, where $k\triangleq [\alpha n]$. Suppose $\bm{\Lambda}^{k}$ is the truncated eigenvalue matrix, where only the top-$k$ diagonal entries of $\bm{\Lambda}$ are preserved, we can define the $\alpha$-quantile GSDM by substituting the occurrence of $\bm{\Lambda}$ in GSDM with $\bm{\Lambda}^{(k)}$. As will be seen in the following section, $\alpha$-quantile GSDM exhibits evidently faster processing speed and comparable performance to GSDM.

\subsection{Theoretical Analysis}

Here, we provide supportive theoretical evidence for the efficacy of GSDM. In Proposition \ref{Prop_spectral_SDE_adj}, we first study the low-rank structure of the spectral diffusion SDE of adjacency matrices. In Proposition \ref{prop2_error_bound}, we further prove that our proposed spectral diffusion enjoys a sharper reconstruction error bound than the standard graph diffusion model.

\begin{proposition}[Spectral Diffusion SDEs on Adjacency Matrix]\label{Prop_spectral_SDE_adj}
    Suppose $\mathbf{f}_{\Lambda}(\bm{\Lambda},t) \triangleq -\sigma_{\Lambda,t}^2/2 \bm{\Lambda}$. The spectral diffusion SDE system \eqref{spectral_fwd_sde} induces an $n$-dimensional SDE system of the adjacency matrix $\mathbf{A}$ on the full space $\R^{n\times n}$. Following previous notations, the forward diffusion SDE is given by
    \begin{align}
        \begin{cases}
            \mathrm{d} \mathbf{X}_t
            = &
            \mathbf{f}^X(\mathbf{X}_t, t) \mathrm{d}t
            +
            \sigma_{X,t} \mathrm{d} \mathbf{B}_t^X,
            \\
            \mathrm{d} \mathbf{A}_t
            = &
            -\frac{1}{2} \sigma_{\Lambda,t}^2 
            \mathbf{A}_t
            \mathrm{d}t
            +
            \sigma_{\Lambda,t} \mathrm{d} \mathbf{M}_t,
            \label{prop1_lam_sde}
        \end{cases}
    \end{align}
    where $(\mathbf{M}_t)_{t\in[0,1]}$ is a $n$-dimensional centered Gaussian process on $\mathbb{R}^{n\times n}$, with zero mean and covariance kernel $\mathcal{K}(s,t): [0,1]\times [0,1] \mapsto \mathbb{R}^{n\times n \times n\times n}$ as
    \begin{equation*}
        \mathcal{K}(s,t)_{i,j,k,l}
        =
        \min(s, t) 
        \cdot
        \sum_{h=1}^n 
        \mathbf{U}_0[i,h]
        \mathbf{U}_0[j,h]
        \mathbf{U}_0[k,h]
        \mathbf{U}_0[l,h].
    \end{equation*}
    Hence, the conditional distribution of $\mathbf{A}_t$ on $\mathbf{A}_0$ is Gaussian
    \small
    \begin{align}
        \mathbf{A}_t | \mathbf{A}_0
        \sim
        N(
            \mathbf{A}_t;
            \mathbf{A}_0 e^{-\frac{1}{2} \int_0^t \sigma_{\tau}^2 \mathrm{d} \tau},
            (1 - e^{- \int_0^t \sigma_{\tau}^2 \mathrm{d} \tau}) \mathcal{K}(1,1)
        ),
    \end{align}
    \normalsize
    which admits a closed-form probability density function.
\end{proposition}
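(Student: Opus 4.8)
\emph{Proof proposal.} The plan is to push the explicit solutions of the decoupled one-dimensional spectral SDEs through the fixed linear map $\bm{\Lambda}\mapsto\mathbf{U}_0\bm{\Lambda}\mathbf{U}_0^\top$, and then to recognize the result as a matrix-valued Ornstein--Uhlenbeck process driven by a degenerate Gaussian noise. Throughout I abbreviate $\sigma_t\triangleq\sigma_{\Lambda,t}$. \textbf{Step 1 (transfer the SDE).} With $\mathbf{f}^{\Lambda}(\bm{\Lambda},t)=-\tfrac12\sigma_t^2\bm{\Lambda}$ and $\mathbf{W}_t^{\Lambda}=\mathrm{diag}(\mathbf{B}_t^{\Lambda})$, writing $\lambda_t^{(h)}$ for the $h$-th diagonal entry of $\bm{\Lambda}_t$, the system \eqref{spectral_fwd_sde} decouples into $n$ independent scalar linear SDEs $\mathrm{d}\lambda_t^{(h)}=-\tfrac12\sigma_t^2\lambda_t^{(h)}\,\mathrm{d}t+\sigma_t\,\mathrm{d}B_t^{(h)}$, where the $B^{(h)}$ are the independent coordinates of $\mathbf{B}_t^{\Lambda}$. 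Since $\mathbf{U}_0$ is deterministic and held fixed along the diffusion, $\bm{\Lambda}\mapsto\mathbf{U}_0\bm{\Lambda}\mathbf{U}_0^\top$ is a constant linear map (equivalently, Itô's formula produces no second-order term), so $\mathrm{d}\mathbf{A}_t=\mathbf{U}_0(\mathrm{d}\bm{\Lambda}_t)\mathbf{U}_0^\top=-\tfrac12\sigma_t^2\mathbf{A}_t\,\mathrm{d}t+\sigma_t\,\mathrm{d}\mathbf{M}_t$ with $\mathbf{M}_t\triangleq\mathbf{U}_0\mathbf{W}_t^{\Lambda}\mathbf{U}_0^\top=\sum_{h=1}^n B_t^{(h)}\,\mathbf{u}_h\mathbf{u}_h^\top$ and $\mathbf{u}_h$ the $h$-th column of $\mathbf{U}_0$; this is exactly \eqref{prop1_lam_sde}.

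\textbf{Step 2 (law of $\mathbf{M}$).} As a fixed linear image of the Gaussian process $(\mathbf{B}_t^{\Lambda})_t$, the matrix-valued process $(\mathbf{M}_t)_t$ is centered Gaussian, so the only thing to compute is its covariance kernel. From $\mathbf{M}_t[i,j]=\sum_h\mathbf{U}_0[i,h]\mathbf{U}_0[j,h]B_t^{(h)}$ and $\mathbb{E}[B_s^{(h)}B_t^{(h')}]=\delta_{hh'}\min(s,t)$, the cross terms with $h\neq h'$ vanish and one is left with $\mathbb{E}\big[\mathbf{M}_s[i,j]\mathbf{M}_t[k,l]\big]=\min(s,t)\sum_h\mathbf{U}_0[i,h]\mathbf{U}_0[j,h]\mathbf{U}_0[k,h]\mathbf{U}_0[l,h]=\mathcal{K}(s,t)_{i,j,k,l}$. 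In particular $\mathcal{K}(s,t)=\min(s,t)\,\mathcal{K}(1,1)$, so $\mathbf{M}$ is a Brownian-type martingale with constant instantaneous covariance $\mathcal{K}(1,1)$, and $\mathcal{K}(1,1)$ is positive semidefinite (a Gram tensor) of rank $n$, supported on $\mathrm{span}\{\mathbf{u}_h\mathbf{u}_h^\top:h=1,\dots,n\}$.

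\textbf{Step 3 (conditional law).} Because \eqref{prop1_lam_sde} is linear with deterministic coefficients, variation of constants yields $\mathbf{A}_t=e^{-\frac12\int_0^t\sigma_\tau^2\,\mathrm{d}\tau}\mathbf{A}_0+\int_0^t\sigma_\tau\,e^{-\frac12\int_\tau^t\sigma_s^2\,\mathrm{d}s}\,\mathrm{d}\mathbf{M}_\tau$. Conditionally on $\mathbf{A}_0$ the stochastic integral is a centered Gaussian, hence $\mathbf{A}_t\mid\mathbf{A}_0$ is Gaussian with mean $\mathbf{A}_0e^{-\frac12\int_0^t\sigma_\tau^2\,\mathrm{d}\tau}$; the Itô isometry together with $\mathrm{d}\langle\mathbf{M}\rangle_\tau=\mathcal{K}(1,1)\,\mathrm{d}\tau$ gives covariance $\big(\int_0^t\sigma_\tau^2 e^{-\int_\tau^t\sigma_s^2\,\mathrm{d}s}\,\mathrm{d}\tau\big)\mathcal{K}(1,1)=\big(1-e^{-\int_0^t\sigma_\tau^2\,\mathrm{d}\tau}\big)\mathcal{K}(1,1)$, which is the stated conditional distribution; writing out the corresponding Gaussian density on the subspace where $\mathcal{K}(1,1)$ is nondegenerate gives the closed form.

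The step I expect to be the main obstacle is the bookkeeping for the matrix/tensor-valued Gaussian process: making the rank-four kernel $\mathcal{K}$ and the quadratic variation $\langle\mathbf{M}\rangle$ precise, justifying the Itô isometry against the \emph{degenerate} driving noise $\mathbf{M}_t$ (which is not a standard Brownian motion on $\mathbb{R}^{n\times n}$ but lives on the $n$-dimensional subspace $\mathrm{span}\{\mathbf{u}_h\mathbf{u}_h^\top\}$), and correctly qualifying the ``closed-form density'' claim, since $\mathbf{A}_t\mid\mathbf{A}_0$ admits a Lebesgue density only on that $n$-dimensional subspace --- which is precisely the low-rank phenomenon the proposition is meant to exhibit.
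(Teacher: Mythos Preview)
Your proposal is correct and follows essentially the same route as the paper: push the spectral SDE through the fixed linear map $\bm{\Lambda}\mapsto\mathbf{U}_0\bm{\Lambda}\mathbf{U}_0^\top$ to obtain \eqref{prop1_lam_sde}, compute the covariance of $\mathbf{M}_t$ entrywise using independence of the coordinates of $\mathbf{B}_t^{\Lambda}$, and then read off the conditional Gaussian law from the explicit Ornstein--Uhlenbeck solution. The only cosmetic difference is in Step~3: the paper solves the OU equation at the level of $\bm{\Lambda}_t$ and then conjugates by $\mathbf{U}_0$, whereas you solve directly for $\mathbf{A}_t$ via variation of constants and the It\^o isometry; both give the same mean and the same covariance $(1-e^{-\int_0^t\sigma_\tau^2\,\mathrm{d}\tau})\mathcal{K}(1,1)$. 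Your remark that the ``closed-form density'' must be understood on the $n$-dimensional subspace $\mathrm{span}\{\mathbf{u}_h\mathbf{u}_h^\top\}$ is a useful clarification that the paper leaves implicit.
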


\begin{remark}
    The proof is postponed to Appendix \ref{proof_prop1}. Proposition \ref{Prop_spectral_SDE_adj} shows that our spectral diffusion framework substantially recasts the evolution of adjacency matrix, by driving the $n^2$-dimensional SDE with $n$-dimensional noise $(\mathbf{M}_t)_{t\in \R}$. Guided by the graph spectrum, the diffusion is concentrated to the salient parts of $\mathrm{supp}(\mathbf{A})$ to prevent introducing irreducible noise to the out-of-support regions.
\end{remark}

The central question for graph generation is how to measure the quality of the synthesis data that is recovered from noise with the learned score function. The key to this question is to establish the reconstruction error bound, i.e. the expected error between the data reconstructed with ground truth score $\nabla \log p_t(\cdot)$ and the learned scores $s_{\bm{\phi}}(\cdot)$. For graph topology generation, our GSDM is proven to enjoy a sharper reconstruction bound than standard graph diffusion. Detailed proof can be found in Appendix \ref{proof_prop2}.

\begin{proposition}[Reconstruction Bounds for Adjacency Matrix Generation]\label{prop2_error_bound}
Following the previous notations, we define the estimated reversed time SDEs for standard and spectral graph diffusion models as 
\begin{align*}
    \mathrm{d}
    \widehat{\mathbf{A}}_t^{\mathrm{full}}
    = &
    \left(
        - \frac{1}{2} \sigma_t^2
        \widehat{\mathbf{A}}_t^{\mathrm{full}}
        -
        \sigma_{t}^2
        s_{\bm{\phi}}(\widehat{\mathbf{A}}_t^{\mathrm{full}},t)
    \right)
    \mathrm{d}\bar{t}
    +
    \sigma_{t} \mathrm{d} \bar{\mathbf{B}}_t^{A},
    \\
    \mathrm{d}
    \widehat{\mathbf{A}}_t^{\mathrm{spec}}
    = &
    \left(
        - \frac{1}{2} \sigma_t^2
        \widehat{\mathbf{A}}_t^{\mathrm{spec}}
        -
        \sigma_{t}^2
        s_{\bm{\varphi}}(\widehat{\mathbf{A}}_t^{\mathrm{spec}},t)
    \right)
    \mathrm{d}\bar{t}
    +
    \sigma_{t} \mathrm{d} \bar{\mathbf{M}}_t,
\end{align*}
where both generation methods share the same drift term and noise schedule $(\sigma_t)_{t\in[0,1]}$. We further assume that $\|s_{\bm{\phi}}(\cdot,t)\|_{\mathrm{lip}} = \mathcal{O}(\E_{|\mathbf{A}_0}\|\nabla_{\mathbf{A}} \log p_{t|0}(\widehat{\mathbf{A}}_t^{\mathrm{full}})\|_{\mathrm{lip}})$ and $\|s_{\bm{\varphi}}(\cdot,t)\|_{\mathrm{lip}} = \mathcal{O}(\E_{|\mathbf{A}_0}\|\nabla_{\mathbf{A}} \log p_{t|0}(\widehat{\mathbf{A}}_t^{\mathrm{spec}})\|_{\mathrm{lip}})$ holds almost surly. By reversing both SDEs from $t=1$ to $t=0$, we obtain $\widehat{\mathbf{A}}_t^{\mathrm{full}}$ and $\widehat{\mathbf{A}}_t^{\mathrm{spec}}$, which are two reconstructions of the authentic $\mathbf{A}_0$, with expected reconstruction errors bounded by
\begin{align*}
    \notag
    & \E\|
    \mathbf{A}_0
    -
    \widehat{\mathbf{A}}_0^{\mathrm{full}}
    \|^2
    \\
    & \leqslant
    M
    \mathcal{E}(\bm{\phi})
    \cdot
    \bigg(
        1
        +
        n^2 K
        \int_0^1
            \Sigma_t^{-2}
            \exp
            \left(
            n^2 K
            \int_t^1
                \Sigma_s^{-2}
            \mathrm{d}s
            \right)
        \mathrm{d}t
    \bigg),
    \\
    \notag
    & \E\|
    \mathbf{A}_0
    -
    \widehat{\mathbf{A}}_0^{\mathrm{spec}}
    \|^2
    \\
    & \leqslant
    M
    \mathcal{E}(\bm{\varphi})
    \cdot
    \left(
        1
        +
        nK
        \int_0^1
            \Sigma_t^{-2}
            \exp
            \left(
            nK
            \int_t^1
                \Sigma_s^{-2}
            \mathrm{d}s
            \right)
        \mathrm{d}t
    \right),
\end{align*}
where $M \triangleq C^2 \|\sigma_{\cdot}\|_{\infty}^4$; $K\triangleq 2ML/\E \|\mathbf{A}_0\|_{2,2}$; $C, L$ are absolute constants; 
$
\Sigma_t^2
\triangleq
1 - e^{
        -
        \int_0^t
            \sigma_s^2
        \mathrm{d}s
        }   
$; $\mathcal{E}(\cdot)$ is the expected score matching objective defined in \eqref{sm_objective}.

\end{proposition}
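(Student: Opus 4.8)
The plan is to run all three processes on a common probability space and compare endpoints through a Gr\"onwall argument in reversed time. First I would couple: starting from $\mathbf{A}_0\sim\mathcal{G}$, solve the forward SDE \eqref{prop1_lam_sde}; by Lemma~\ref{lem_reversed_sde} the very same path, re-indexed, solves the reversed-time SDE with the \emph{true} score and terminal value $\mathbf{A}_0$. I would then solve the learned reversed SDE for $\widehat{\mathbf{A}}_t^{\mathrm{full}}$ (resp.\ $\widehat{\mathbf{A}}_t^{\mathrm{spec}}$) driven by the \emph{same} reversed Brownian/Gaussian noise and initialized at the same terminal point; existence and uniqueness of all these solutions follow from the assumed Lipschitz regularity of $s_{\bm{\phi}},s_{\bm{\varphi}}$ together with the linear drift. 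Writing $\Delta_t:=\mathbf{A}_t-\widehat{\mathbf{A}}_t$, the driving noise cancels, so $\Delta_t$ is of bounded variation with $\Delta_1=0$ and $\E\|\mathbf{A}_0-\widehat{\mathbf{A}}_0\|^2=\E\|\Delta_0\|^2$.

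Next I would derive a linear differential inequality for $u_t:=\E\|\Delta_t\|^2$. The chain rule gives $\tfrac{\mathrm{d}}{\mathrm{d}\bar t}\|\Delta_t\|^2 = 2\langle \Delta_t,\, -\tfrac12\sigma_t^2\Delta_t -\sigma_t^2(\nabla_{\mathbf{A}}\log p_t(\mathbf{A}_t)-s_{\bm{\phi}}(\widehat{\mathbf{A}}_t))\rangle$. I would split the score discrepancy as $\big(\nabla_{\mathbf{A}}\log p_t(\mathbf{A}_t)-s_{\bm{\phi}}(\mathbf{A}_t)\big)+\big(s_{\bm{\phi}}(\mathbf{A}_t)-s_{\bm{\phi}}(\widehat{\mathbf{A}}_t)\big)$: the first summand is the score-matching residual $R_t$, whose squared norm is in expectation controlled by $\mathcal{E}(\bm{\phi})$ in \eqref{sm_objective} (this is precisely where the coupling pays off — $R_t$ is evaluated along the \emph{forward} trajectory, matching the definition of $\mathcal{E}$); the second summand is bounded by $\|s_{\bm{\phi}}(\cdot,t)\|_{\mathrm{lip}}\|\Delta_t\|$. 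Cauchy--Schwarz, Young's inequality, and taking expectations then yield $\tfrac{\mathrm{d}}{\mathrm{d}\bar t}u_t \le c_1\sigma_t^2\|s_{\bm{\phi}}(\cdot,t)\|_{\mathrm{lip}}\,u_t + c_2\sigma_t^2\,\mathcal{E}(\bm{\phi})$, i.e.\ $\dot u_t\le a_t u_t+b_t$ along reversed time, with the analogous inequality for $\widehat{\mathbf{A}}_t^{\mathrm{spec}}$.

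The crux is the Lipschitz estimate of the score, which is where the $n^2$ versus $n$ separation is born. By Proposition~\ref{Prop_spectral_SDE_adj}, in the spectral model $\mathbf{A}_t\mid\mathbf{A}_0$ is Gaussian with covariance $\Sigma_t^2\,\mathcal{K}(1,1)$, and $\mathcal{K}(1,1)$ is a rank-$n$ projection onto the span of the $\{\mathbf{U}_0[\cdot,h]\mathbf{U}_0[\cdot,h]^\top\}_{h=1}^n$; hence the conditional (and, via the standing assumption $\|s_{\bm{\varphi}}(\cdot,t)\|_{\mathrm{lip}}=\mathcal{O}(\E_{|\mathbf{A}_0}\|\nabla_{\mathbf{A}}\log p_{t|0}\|_{\mathrm{lip}})$, the learned) score only varies over an $n$-dimensional support, giving a Lipschitz norm of order $n\,\Sigma_t^{-2}$. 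For the standard model the conditional covariance is the full isotropic $\Sigma_t^2 I$ on $\R^{n\times n}$, so the corresponding estimate is of order $n^2\,\Sigma_t^{-2}$. Substituting these into $a_t$, together with $\sigma_t\le\|\sigma\|_\infty$ and the normalization by $\E\|\mathbf{A}_0\|_{2,2}$, produces $a_t=n^2K\,\Sigma_t^{-2}$ in the full case and $a_t=nK\,\Sigma_t^{-2}$ in the spectral case, with $M=C^2\|\sigma\|_\infty^4$, $K=2ML/\E\|\mathbf{A}_0\|_{2,2}$, and $C,L$ absorbing the $\mathcal{O}(\cdot)$ constants. Integrating $\dot u_t\le a_tu_t+b_t$ from $t=1$ down to $t=0$ by Gr\"onwall and collecting the source term into the prefactor $M\mathcal{E}(\bm{\phi})$ gives $\E\|\mathbf{A}_0-\widehat{\mathbf{A}}_0\|^2\le M\mathcal{E}(\bm{\phi})\big(1+\int_0^1 a_t\exp(\int_t^1 a_s\,\mathrm{d}s)\,\mathrm{d}t\big)$, which is exactly the claimed bound for each model.

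I expect the main obstacle to be the sharp Lipschitz estimate of the marginal score: one must argue carefully that, in the spectral construction, the forward diffusion and therefore the conditional law $\mathbf{A}_t\mid\mathbf{A}_0$ never leaves the $n$-dimensional eigenvalue manifold (Proposition~\ref{Prop_spectral_SDE_adj}), so the corresponding score has no variation — and no Lipschitz cost — in the remaining $n^2-n$ directions, in stark contrast to the isotropic full-rank diffusion which spreads the conditional covariance, hence the score's Lipschitz budget, over all $n^2$ coordinates. A secondary technical point is making the synchronous coupling and the transfer of the score-matching error from the forward trajectory to the learned reversed trajectory rigorous, which is what necessitates the standing Lipschitz hypotheses on $s_{\bm{\phi}}$ and $s_{\bm{\varphi}}$.
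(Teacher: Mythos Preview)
Your proposal is correct and follows essentially the same route as the paper: couple the oracle and learned reversed SDEs with common noise so the driving term cancels, split the score discrepancy into a score-matching residual plus a Lipschitz increment, and close with Gr\"onwall in reversed time. The only notable difference is that the paper, rather than reasoning about the rank-$n$ support of $\mathbf{A}_t^{\mathrm{spec}}$ inside $\R^{n\times n}$ via Proposition~\ref{Prop_spectral_SDE_adj}, applies its generic reconstruction lemma directly to the eigenvalue process $\bm{\Lambda}_t\in\R^n$ (so the ambient dimension is $n$ from the outset) and only at the end transfers the bound to $\mathbf{A}$ via the unitary isometry $\|\mathbf{A}_0-\widehat{\mathbf{A}}_0^{\mathrm{spec}}\|=\|\mathbf{U}_0(\bm{\Lambda}_0-\widehat{\bm{\Lambda}}_0)\mathbf{U}_0^\top\|=\|\bm{\Lambda}_0-\widehat{\bm{\Lambda}}_0\|$, which sidesteps any discussion of degenerate Gaussian scores.
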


\begin{remark}
While the error bound of $\widehat{\mathbf{A}}_0^{\mathrm{full}}$ is at the order of $\mathcal{O}(n^2 \exp(n^2))$,  $\widehat{\mathbf{A}}_0^{\mathrm{spec}}$ exhibits a much sharper bound of order $\mathcal{O}(n \exp(n))$. For large-scale graphs with a large number of nodes $n$, our proposed GSDM enjoys a substantially better performance guarantee, which coincides with our numerical results. Moreover, under additional conditions, the error bounds can be significantly improved for well-trained score networks $\bm{\phi}^*, \bm{\varphi}^*$, by showing that $\mathcal{E}(\bm{\phi}^*) \ll \mathcal{E}(\bm{\phi})$ as in Proposition \ref{Prop_sm_converge}. The detailed proof can be found in Appendix~\ref{proof_prop3}. 
\end{remark}

\begin{definition}[$\beta$-smooth]
    $f: \mathbb{R}^m \mapsto \mathbb{R}^n$ is called $\beta$-smooth if and only if
    \begin{equation*}
        \|f(x_1) - f(x_2) - \nabla f(x_2)^\top (x_1 - x_2) \|
        \leqslant
        \frac{\beta}{2}
        \|x_1 - x_2\|^2
    \end{equation*}
    holds for $\forall\ x_1, x_2 \in \mathbb{R}^m$.
\end{definition}

\begin{proposition}[Convergence of Score Matching Objective Minimization]\label{Prop_sm_converge}
    Recall that the score matching objective of a generic sample $\mathbf{Z} \in \mathbb{R}^d$ ($\widehat{\mathbf{A}}^{\mathrm{full}}$ or $\widehat{\mathbf{A}}^{\mathrm{spec}}$) is defined as
    \begin{align}
        \mathcal{E}(\bm{\theta};\mathbf{Z})
        \triangleq &
        \mathbb{E}_{\mathbf{Z}_t|\mathbf{Z}} 
        \|
            s_{\bm{\theta}}(\mathbf{Z}_t,t) 
            - 
            \nabla_{\mathbf{Z}} \log p_t(\mathbf{Z}_t)
        \|^2,
    \end{align}
    and the expected and empirical score matching error are defined as
    \begin{align}
        \mathcal{E}(\bm{\theta})
        \triangleq &
        \mathbb{E}_{\mathbf{Z}\sim \mathcal{D}} 
        \mathcal{E}(\bm{\theta};\mathbf{Z})
        \\
        \widehat{\mathcal{E}}(\bm{\theta}; \mathcal{S}_N)
        \triangleq &
        \mathbb{E}_{\mathbf{Z}\sim \mathcal{S}_N} 
        \mathcal{E}(\bm{\theta};\mathbf{Z})
    \end{align}
    where $\mathcal{D}$ is the population distribution of $\mathbf{Z}$, and $\mathcal{S}\triangleq \{\mathbf{Z}^i\}_{i=1}^N$ is the uniform distribution over an i.i.d sampled training dataset of size $N$.
    Suppose $\mathcal{E}(\bm{\theta})$ is minimized via running standard Stochastic Gradient Descent (SGD) on training data, i.e. at the $k$-th iteration, $\bm{\theta}_k$ is updated on a mini-batch of size $b$
    \begin{align}
        \bm{\theta}_{k+1} 
        \triangleq &
        \bm{\theta}_k - \eta \nabla_{\bm{\theta}} \widehat{\mathcal{E}}(\bm{\theta}; \mathcal{S}_b).
    \end{align}
    Assume that almost surely (w.r.t $\mathbf{Z}$), $\mathcal{E}(\cdot;\mathbf{Z})$ is $\beta$-smooth, and the tangent kernel $\mathbf{K}_{\bm{\theta}}(\mathcal{S})\in \mathbb{R}^{Nd\times Nd}$ of $s_{\bm{\theta}}(\cdot)$ satisfies
    \begin{align*}
        & \lambda_{\mathrm{min}}
        \left(
            \mathbf{K}_{\bm{\theta}}(\mathcal{S})
        \right)
        \geqslant
        \lambda > 0
        , \
        \bm{\theta} \in B(\bm{\theta}_0, R),
        \\
        & \mathbf{K}_{\bm{\theta}}(\mathcal{S})[Ni+1:N(i+1), Nj+1:N(j+1)]
        \\
        & \triangleq 
        \nabla_{\bm{\theta}}s_{\bm{\theta}}(\mathbf{Z}^i)^\top
        \nabla_{\bm{\theta}}s_{\bm{\theta}}(\mathbf{Z}^j),
    \end{align*}
    with $R = 2N\sqrt{2\beta \mathcal{E}(\bm{\theta}_0)}/ (\mu \delta)$, $\delta > 0$. Then, with probability $1-\delta$ over the choice of mini-batch $\mathcal{S}_b$, SGD with a learning rate $\eta \leqslant \frac{\lambda / N}{N\beta (N^2\beta + \lambda(b-1)/N)}$ converges to a global solution in the ball $B(\bm{\theta}_0,R)$ with exponential convergence rate
    \begin{align}
        \mathcal{E}(\bm{\theta}_k)
        \leqslant
        \left(
            1 
            -
            \frac{\lambda b \eta}{N}
        \right)^k
        \mathcal{E}(\bm{\theta}_0).
    \end{align}
\end{proposition}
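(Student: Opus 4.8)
The plan is to run the standard Neural-Tangent-Kernel argument for global convergence of stochastic gradient descent on an overparametrized model, tailored to the denoising score-matching loss, combining three ingredients: (i) a \emph{local Polyak--\L ojasiewicz (PL) inequality} on the ball $B(\bm{\theta}_0,R)$, read off from the tangent-kernel floor $\lambda_{\min}(\mathbf{K}_{\bm{\theta}}(\mathcal{S}))\geqslant\lambda$; (ii) the $\beta$-smoothness descent lemma together with a mini-batch variance estimate, giving a per-step geometric contraction of the training loss; and (iii) an induction that keeps every iterate inside $B(\bm{\theta}_0,R)$ — so that (i) stays in force — promoted to a ``with probability $1-\delta$'' statement by a Markov-type tail bound, which is precisely where the $1/\delta$ in the radius $R=2N\sqrt{2\beta\mathcal{E}(\bm{\theta}_0)}/(\mu\delta)$ enters. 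Throughout, $\widehat{\mathcal{E}}(\bm{\theta};\mathcal{S}_N)$ denotes the full-sample training loss — which plays the role of $\mathcal{E}(\bm{\theta})$ in the statement once $\mathcal{S}_N$ is taken as the reference distribution — and $\mathcal{S}_b$ a uniformly drawn size-$b$ mini-batch.

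\textbf{Step 1 (the PL inequality).} Differentiating the per-sample objective through the network gives $\nabla_{\bm{\theta}}\mathcal{E}(\bm{\theta};\mathbf{Z}^i)=\mathbb{E}_{\mathbf{Z}^i_t|\mathbf{Z}^i}\big[2\,\nabla_{\bm{\theta}}s_{\bm{\theta}}(\mathbf{Z}^i_t,t)^{\top}\big(s_{\bm{\theta}}(\mathbf{Z}^i_t,t)-\nabla\log p_t(\mathbf{Z}^i_t)\big)\big]$. Stacking the residuals into $\mathbf{r}_{\bm{\theta}}\in\mathbb{R}^{Nd}$ and the parameter-Jacobians into $\mathbf{J}_{\bm{\theta}}$, one has $\nabla_{\bm{\theta}}\widehat{\mathcal{E}}(\bm{\theta};\mathcal{S}_N)=\tfrac{2}{N}\mathbf{J}_{\bm{\theta}}^{\top}\mathbf{r}_{\bm{\theta}}$ and $\mathbf{K}_{\bm{\theta}}(\mathcal{S})=\mathbf{J}_{\bm{\theta}}\mathbf{J}_{\bm{\theta}}^{\top}$, hence for every $\bm{\theta}\in B(\bm{\theta}_0,R)$
\begin{align*}
\big\|\nabla_{\bm{\theta}}\widehat{\mathcal{E}}(\bm{\theta};\mathcal{S}_N)\big\|^2
&=\frac{4}{N^2}\,\mathbf{r}_{\bm{\theta}}^{\top}\mathbf{K}_{\bm{\theta}}(\mathcal{S})\,\mathbf{r}_{\bm{\theta}} \\
&\geqslant \frac{4\lambda}{N^2}\,\|\mathbf{r}_{\bm{\theta}}\|^2
= \frac{4\lambda}{N}\,\widehat{\mathcal{E}}(\bm{\theta};\mathcal{S}_N) .
\end{align*}
(When the inner expectation over $\mathbf{Z}_t\mid\mathbf{Z}$ is present one works with the corresponding integrated Gram operator; the bound is unchanged up to the normalization convention fixed for $\mathbf{K}_{\bm{\theta}}$.) In particular every stationary point of $\widehat{\mathcal{E}}(\cdot;\mathcal{S}_N)$ in the ball is a global minimizer of value $0$, so ``convergence to a global solution'' is well-posed.

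\textbf{Steps 2--3 (contraction and trapping).} By $\beta$-smoothness the descent lemma gives $\widehat{\mathcal{E}}(\bm{\theta}_{k+1};\mathcal{S}_N)\leqslant\widehat{\mathcal{E}}(\bm{\theta}_k;\mathcal{S}_N)-\eta\langle\nabla_k,\mathbf{g}_k\rangle+\tfrac{\beta\eta^2}{2}\|\mathbf{g}_k\|^2$, with $\mathbf{g}_k\triangleq\nabla_{\bm{\theta}}\widehat{\mathcal{E}}(\bm{\theta}_k;\mathcal{S}_b)$ and $\nabla_k\triangleq\nabla_{\bm{\theta}}\widehat{\mathcal{E}}(\bm{\theta}_k;\mathcal{S}_N)$. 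Taking the conditional expectation over the mini-batch, unbiasedness of $\mathbf{g}_k$ turns the cross term into (a multiple of) $-\eta\|\nabla_k\|^2$, while $\mathbb{E}\|\mathbf{g}_k\|^2=\|\nabla_k\|^2+\mathrm{Var}(\mathbf{g}_k)$ and the without-replacement sampling variance is controlled by a $\tfrac{N-b}{b(N-1)}$-type factor times the mean squared per-sample gradient, which $\beta$-smoothness (plus nonnegativity) and the per-sample version of Step~1 both bound by $\widehat{\mathcal{E}}(\bm{\theta}_k;\mathcal{S}_N)$. Plugging in the PL bound of Step~1 and taking $\eta$ no larger than the stated threshold $\tfrac{\lambda/N}{N\beta(N^2\beta+\lambda(b-1)/N)}$ makes the coefficient of $\widehat{\mathcal{E}}(\bm{\theta}_k;\mathcal{S}_N)$ at most $1-\lambda b\eta/N$ (the $b/N$ reflecting the fraction of data touched per step), whence by iteration $\mathbb{E}\,\widehat{\mathcal{E}}(\bm{\theta}_k;\mathcal{S}_N)\leqslant(1-\lambda b\eta/N)^k\widehat{\mathcal{E}}(\bm{\theta}_0;\mathcal{S}_N)$ — \emph{provided} every iterate stays in $B(\bm{\theta}_0,R)$. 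To secure that, induct on $k$: the descent estimate makes $\eta\|\mathbf{g}_j\|$, in conditional expectation, of order $\sqrt{\widehat{\mathcal{E}}(\bm{\theta}_j;\mathcal{S}_N)-\widehat{\mathcal{E}}(\bm{\theta}_{j+1};\mathcal{S}_N)}$, so Cauchy--Schwarz and the geometric decay bound the cumulative displacement $\sum_{j\leqslant k}\eta\|\mathbf{g}_j\|$, in expectation, by a quantity of order $N\sqrt{\beta\,\widehat{\mathcal{E}}(\bm{\theta}_0;\mathcal{S}_N)}/\mu$; Markov's inequality over the mini-batch draws then shows that with probability $\geqslant1-\delta$ the displacement never exceeds $R=2N\sqrt{2\beta\mathcal{E}(\bm{\theta}_0)}/(\mu\delta)$, closing the induction. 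On that event Step~1 applied at every iterate, so the geometric bound holds pathwise and $\widehat{\mathcal{E}}(\bm{\theta}_k;\mathcal{S}_N)\to 0$: SGD reaches a global solution inside $B(\bm{\theta}_0,R)$ at the stated rate.

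\textbf{Main obstacle.} The delicate point is the circularity between Step~1 and the trapping argument — the contraction is valid only while $\bm{\theta}_k\in B(\bm{\theta}_0,R)$, yet the proof that the iterates never escape leans on that very contraction — and it is resolved only by carrying the two inductions jointly, with $R$ calibrated exactly to $\mathcal{E}(\bm{\theta}_0)$, $\beta$ and $\lambda$ (through $\mu$), and, since the dynamics are stochastic, by a \emph{high-probability} escape bound rather than a deterministic one, which is what inflates $R$ by the factor $1/\delta$. The remaining care goes into the mini-batch bookkeeping: $\mathrm{Var}(\mathbf{g}_k)$ must be bounded by the \emph{current} loss rather than an a priori constant so that the recursion closes multiplicatively, and the inner expectation over the diffusion time $\mathbf{Z}_t\mid\mathbf{Z}$ must be handled consistently with the definition of $\mathbf{K}_{\bm{\theta}}$; together these are what dictate the conservative upper bound imposed on the step size $\eta$.
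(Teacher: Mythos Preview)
Your approach is essentially the same as the paper's: derive a local Polyak--\L ojasiewicz inequality from the tangent-kernel floor, then invoke an SGD-under-PL convergence result. The paper's proof is considerably shorter because it stops after your Step~1 --- it stacks the residuals into a vector $\mathbf{h}$, writes $\|\nabla_{\bm{\theta}}\widehat{\mathcal{E}}(\bm{\theta};\mathcal{S}_N)\|^2=\tfrac{1}{N^2}\mathbf{h}^\top\mathbf{K}_{\bm{\theta}}(\mathcal{S})\mathbf{h}\geqslant\tfrac{\lambda}{N}\widehat{\mathcal{E}}(\bm{\theta};\mathcal{S}_N)$, observes this is the $\lambda/N$-PL condition, and then simply \emph{cites} Theorem~7 of the SGD-convergence reference to obtain the contraction, the learning-rate ceiling, the $1/\delta$ radius and the trapping in $B(\bm{\theta}_0,R)$ in one stroke. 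Your Steps~2--3 are, in effect, a sketch of what that cited theorem contains (descent lemma, mini-batch variance controlled by the current loss, joint induction on contraction and confinement, Markov for the high-probability radius), so you are not taking a different route but rather unpacking the black box the paper invokes. A minor normalization difference: you carry the chain-rule factor~$2$ and get PL constant $4\lambda/N$, whereas the paper absorbs it and states $\lambda/N$; this is cosmetic and does not affect correctness.
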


\section{Experiments}

\begin{table*}[ht]
\caption{ \textbf{Generation results on the generic graph datasets.} We report the MMD distances between the test datasets and generated graphs. The best results are highlighted in bold (the smaller the better). 
    Hyphen (-) denotes out-of-resources that take more than 10 days or are not applicable due to memory issues. 
    }
    \label{tab:generic}
\begin{threeparttable}
    
    \centering
    \resizebox{\textwidth}{!}{
    \renewcommand{\arraystretch}{1.1}
    \renewcommand{\tabcolsep}{8pt}
    \begin{tabular}{l l c c c c c c c c c c c c c c c c}
    \toprule
        & & 
        \multicolumn{4}{c}{{Community-small}} &
        \multicolumn{4}{c}{{Enzymes}} &
        \multicolumn{4}{c}{{Grid}}\\
    \cmidrule(l{2pt}r{2pt}){3-6}    
    \cmidrule(l{2pt}r{2pt}){7-10}
    \cmidrule(l{2pt}r{2pt}){11-14}
    \cmidrule(l{2pt}r{2pt}){15-18}
        & &
        \multicolumn{4}{c}{Synthetic, $12\leq|V|\leq20$} &
        \multicolumn{4}{c}{Real, $10\leq|V|\leq125$} &
        \multicolumn{4}{c}{Synthetic, $100\leq|V|\leq400$} \\
    \cmidrule(l{2pt}r{2pt}){3-6}
    \cmidrule(l{2pt}r{2pt}){7-10}
    \cmidrule(l{2pt}r{2pt}){11-14}
    \cmidrule(l{2pt}r{2pt}){15-18}
        &  & Deg.$\downarrow$ & Clus.$\downarrow$ & Orbit$\downarrow$ & Avg.$\downarrow$ & Deg.$\downarrow$ & Clus.$\downarrow$ & Orbit$\downarrow$ & Avg.$\downarrow$ & Deg.$\downarrow$ & Clus.$\downarrow$ & Orbit$\downarrow$ & Avg.$\downarrow$ \\
    \midrule
        \multirow{4}{*}{Autoreg.}
        & DeepGMG \cite{li2018learning} &  0.220 & 0.950 & 0.400 & 0.523 & - & - & - & - & - & - & - & - \\
        & GraphRNN \cite{you2018graphrnn} &  0.080 & 0.120 & 0.040 & 0.080 &  {0.017} &  \textbf{0.043} & {0.021} & {0.043} \\
        & GraphAF \cite{shi2020graphaf}  & 0.18 & 0.20 & 0.02 & 0.133 & 1.669 & 1.283 & 0.266 & 1.073 & - & - & - & - \\
        & GraphDF \cite{luo2021graphdf} & 0.06 & 0.12 & 0.03 & 0.070 & 1.503 & 1.061 & 0.202 & 0.922 & - & - & - & - \\
    \midrule
        \multirow{5.5}{*}{One-shot}
        & GraphVAE \cite{simonovsky2018graphvae} & 0.350 & 0.980 & 0.540 & 0.623 & 1.369 & 0.629 & 0.191 & 0.730 & 1.619 & \textbf{0.0} & 0.919 & 0.846 \\
        & GNF \cite{liu2019graph}  & 0.200 & 0.200 & 0.110 & 0.170 & - & - & - & - & - & - & - & - \\
        & EDP-GNN \cite{niu2020permutation} & 0.053 & 0.144 & 0.026 & 0.074 & 0.023 & 0.268 & 0.082 & 0.124 & 0.455 & 0.238 & 0.328 & 0.340 \\

        & {GDSS}\footnote[1]{It} \cite{jo2022score} & {0.045} & {0.086} & {0.007} & {0.046} & 0.026 & {0.102} & {0.009} & {0.046} & 0.111 & 0.005 & 0.070 & 0.062 \\
    \cmidrule(l{4pt}r{2pt}){2-18}
        & \textbf{Ours} & \textbf{0.011}&	\textbf{0.015}&	\textbf{0.001}&	\textbf{0.009}& \textbf{0.013}& 0.088 &\textbf{0.01}&	\textbf{0.037}& \textbf{0.002}&	\textbf{0.0}&	\textbf{0.0}&	\textbf{0.0007} \\
    \bottomrule
    \end{tabular}}
\begin{tablenotes}
\item[1] The average results of the Enzymes dataset reported in the GDSS original paper is 0.032. However, the best result we can obtain using the 

author's released code and checkpoint with careful fine-tuning is 0.046.
\end{tablenotes}
\end{threeparttable}
\end{table*}

In this section, we evaluate our proposed GDSM with state-of-the-art graph generative baselines on two types of graph generation tasks: generic graph generation and molecule generation, over several benchmark datasets. We also conduct extensive ablation studies and visualizations to further illustrate the effectiveness and efficiency of GDSM.

\begin{figure}[htbp]
    \centering    
    \includegraphics[width=0.85\columnwidth]{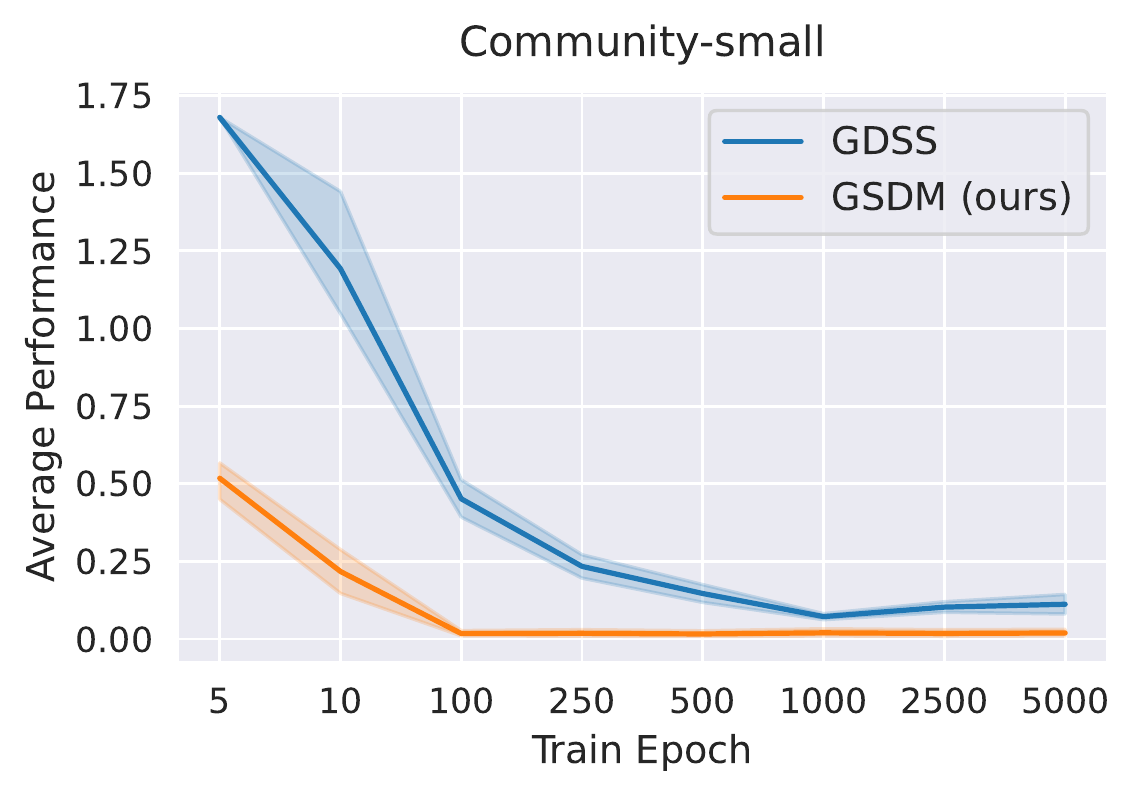}
    \caption{GSDM enjoys a significantly faster convergence rate than GDSS. On the community-small dataset, our GSDM reaches the SOTA performance within 100 training epochs.}
    \label{convergence}
\end{figure}

\subsection{Generic Graph Generation}



\noindent \textbf{Datasets} We test our model on three generic datasets with different scales, and we use $N$ to denote the number of nodes: (1) Community-small $(12\leq N\leq20)$: contains 100 small community graphs. (2) Enzymes $(10\leq N\leq125)$: contains 578 protein graphs which represent the protein tertiary structures of the enzymes from the BRENDA database. (3) Grid $(100 \leq N \leq 400)$: contains 100 standard 2D grid graphs. To fairly compare our model with baselines, we adopt the experimental and evaluation setting from \cite{you2018graphrnn} with the same train/test split. 

\noindent \textbf{Baselines} We compare our model with well-known or state-of-the-art graph generation methods, which can be categorized into auto-regressive models and one-shot models. The \textbf{auto-regressive model} refers to sequential generation, which constructs a graph via a set of consecutive steps, usually done nodes by nodes and edges by edges \cite{guo2020systematic}. Under this category, we include DeepGMG \cite{li2018learning}, GraphRNN \cite{you2018graphrnn}, GraphAF \cite{shi2020graphaf}, and GraphDF \cite{luo2021graphdf}. The \textbf{one-shot model} refers to building a probabilistic graph model that can generate all nodes and edges in one shot \cite{guo2020systematic}. Under this category, we include GraphVAE \cite{simonovsky2018graphvae}, Graph Normalizing Flow(GNF) \cite{liu2019graph}, EDP-GNN \cite{niu2020permutation}, and GDSS \cite{jo2022score}.



\noindent \textbf{Metrics} We adopt the maximum mean discrepancy (MMD) to compare the distributions of graph statistics, such as the degree, the clustering coefficient, and the number of occurrences of orbits with 4 nodes~\cite{you2018graphrnn,jo2022score} between the same number of generated and test graphs.



\begin{table*}[ht]
    \caption{\small \textbf{Generation results on the QM9 and ZINC250k datasets.} Results are the means of three different runs, and the best results are highlighted in bold. Values denoted by * are taken from the respective original papers. Other results are obtained by running open-source codes. Val. w/o corr. denotes the Validity w/o correction metric, and values that do not exceed 50\% are underlined.}
    \centering
    \resizebox{\textwidth}{!}{
    \renewcommand{\arraystretch}{1.1}
    \renewcommand{\tabcolsep}{8pt}
    \begin{tabular}{llcccccccccc}
    \toprule
    & & \multicolumn{5}{c}{QM9} & \multicolumn{5}{c}{ZINC250k} \\
    \cmidrule(l{2pt}r{2pt}){3-7}
    \cmidrule(l{2pt}r{2pt}){8-12}
        & Method & Validity (\%)$\uparrow$ & Val. w/o corr. (\%)$\uparrow$ & NSPDK$\downarrow$ & FCD$\downarrow$ & Time (s)$\downarrow$ &Validity (\%)$\uparrow$& Val. w/o corr. (\%)$\uparrow$ & NSPDK$\downarrow$ & FCD$\downarrow$ & Time (s)$\downarrow$ \\
    \midrule
        \multirow{4}{*}{Autoreg.}
        & GraphAF~\cite{shi2020graphaf} & 100 &67* & 0.020 & 5.268 & 2.28$e^{3}$ &100& 68* & 0.044 & 16.289 & 5.72$e^{3}$ \\
        & GraphAF+FC & 100&74.43 & 0.021 & 5.625 & 2.32$e^{3}$ & 100 &68.47 & 0.044 & 16.023 & 5.91$e^{3}$ \\
        & GraphDF~\cite{luo2021graphdf} & 100 &82.67* & 0.063 & 10.816 & 5.08$e^{4}$ &100& 89.03* & 0.176 & 34.202 & 5.87$e^{4}$ \\
        & GraphDF+FC &100 &93.88 & 0.064 & 10.928 & 4.72$e^{4}$ &100& 90.61 & 0.177 & 33.546 & 5.79$e^{4}$ \\
    \midrule
        \multirow{5.5}{*}{One-shot}
        & MoFlow~\cite{zang2020moflow} &100 &91.36 & 0.017 & 4.467 & \textbf{4.58} &100 & 63.11 & 0.046 & 20.931 & \textbf{25.9} \\
        & EDP-GNN~\cite{niu2020permutation} &100 &\underline{47.52} & 0.005 & {2.680} & 4.13$e^{3}$ &100& 82.97 & 0.049 & 16.737 & 8.41$e^{3}$ \\
        & GraphEBM~\cite{liu2021graphebm} &100 &\underline{8.22} & 0.030 & 6.143 & 35.33 &100 &\underline{5.29} & 0.212 & 35.471 & 53.72 \\

        & {GDSS}~\cite{jo2022score} &100 &95.72* & \textbf{0.003}* & 2.900* & 1.06$e^{2}$ & 100& \textbf{97.01}* & 0.019* & 14.656* & 2.11$e^{3}$ \\
    \cmidrule(l{2pt}r{2pt}){2-12}
        & \textbf{Ours} &100 & \textbf{99.9} & \textbf{0.003} & \textbf{2.650} & 18.02 & 100 &92.70 & \textbf{0.017} & \textbf{12.956} &  45.91\\
    \bottomrule
    \end{tabular}}
    \label{tab:mol}
\end{table*}



\noindent \textbf{Results} We show the results in Table \ref{tab:generic}. The results show that our proposed GSDM significantly outperforms both the auto-regressive baselines and one-shot baseline methods. Specifically, GDSS is the SOTA graph diffusion model which performs the diffusion in the whole space of the graph data. Our method substantially outperforms GDSS in terms of both average performance and convergence rate (Figure \ref{convergence}), which demonstrates the advantages of performing SDEs in the spectrum of the graph compared to the whole space. 


\subsection{Molecules Generation}

Besides generic graph generation, our model can also generate organic molecules through our proposed reverse diffusion process. We test our model with two well-known molecule datasets: QM9 \cite{ramakrishnan2014quantum} and ZINC250K \cite{irwin2012zinc}. Following previous works \cite{jo2022score,luo2021graphdf}, the molecules are kekulized by the RDKit library \cite{landrum2016rdkit} with hydrogen atoms removed. We evaluate the quality of 10,000 generated molecules with \textbf{Frechet ChemNet Distance (FCD)} \cite{preuer2018frechet}, \textbf{Neighborhood subgraph pairwise distance kernel (NSPDK) MMD} \cite{costa2010fast}, \textbf{validity w/o correction}, and the \textbf{generation time}. FCD computes the distance between the testing and the generated molecules using the activations of the
penultimate layer of the ChemNet. (NSPDK) MMD computes the MMD between the generated and the testing set which takes into account both
the node and edge features for evaluation. Generally speaking, FCD measures the generation quality in the view of molecules in the chemical space, while NSPDK MMD evaluates the generation quality from the graph structure perspective. Besides, following \cite{jo2022score}, we also include the \textbf{validity w/o correction} as another metric to explicitly evaluate the quality of molecule generation prior to the correction procedure. It computes the fraction of the number of valid molecules without valency correction or edge resampling over the total number of generated molecules. In contrast, \textbf{validity} measures the fraction of the valid molecules after the correction phase. \textbf{Generation time} measures the time for generating 10,000 molecules in the form of RDKit. It is a salient measure of the practicability of molecule generation, especially for macromolecule generation. 

\begin{table*}[ht]
    \caption{ \textbf{Ablation study on the $\alpha$-quantile eigenvalues.} The metrics used are the same as in Table 1. The results that \textbf{surpass} the baseline methods in Table 1 are highlighted in bold (the smaller the better). Avg.\% denotes the percentage of average scores achieved compared to using the whole eigenvalues and eigenvectors set.}
    \label{tab:eigen}
    \centering
    \resizebox{\textwidth}{!}{
    \renewcommand{\arraystretch}{1.1}
    \renewcommand{\tabcolsep}{8pt}
    \begin{tabular}{l l c c c c c c c c c c c c c c c c}
    \toprule
        & 
        \multicolumn{5}{c}{{Community-small}} &
        \multicolumn{5}{c}{{Enzymes}} &
        \multicolumn{5}{c}{{Grid}}\\
    \cmidrule(l{2pt}r{2pt}){2-6}    
    \cmidrule(l{2pt}r{2pt}){7-11}
    \cmidrule(l{2pt}r{2pt}){12-16}
        &
        \multicolumn{5}{c}{Synthetic, $12\leq|V|\leq20$} &
        \multicolumn{5}{c}{Real, $10\leq|V|\leq125$} &
        \multicolumn{5}{c}{Synthetic, $100\leq|V|\leq400$} \\
    \cmidrule(l{2pt}r{2pt}){2-6}
    \cmidrule(l{2pt}r{2pt}){7-11}
    \cmidrule(l{2pt}r{2pt}){12-16}
        $\alpha$ &  Deg.$\downarrow$ & Clus.$\downarrow$ & Orbit$\downarrow$ & Avg.$\downarrow$ & Avg.\%$\uparrow$ &  Deg.$\downarrow$ & Clus.$\downarrow$ & Orbit$\downarrow$ & Avg.$\downarrow$ & Avg.\%$\uparrow$ & Deg.$\downarrow$ & Clus.$\downarrow$ & Orbit$\downarrow$ & Avg.$\downarrow$  & Avg.\%$\uparrow$ \\
    \midrule

        10\% &1.010	&1.105	&0.227	&0.781&	0.012 & 0.446	&0.608&	0.059 & 0.371	&0.029& 1.996&	\textbf{0}	&1.013 & 1.003&	0.001
        \\
        20\% &  0.450	&1.102&	0.102&	0.551	&0.016 &0.056	&0.220&	0.012&	0.096&0.395 & 1.996&	\textbf{0}	&1.013 & 1.003	& 0.001\\
        30\%  &0.085	&0.421&	0.030&	0.179&	0.050 & \textbf{0.011}&	\textbf{0.093}&	\textbf{0.012}	&\textbf{0.039}&	0.982& \textbf{0.107}&	\textbf{0}& \textbf{0.044}&	\textbf{0.050}&	0.012\\
        40\%  & \textbf{0.039}&	\textbf{0.056}&	\textbf{0.006}	&\textbf{0.034}	&0.268& \textbf{0.010}	&\textbf{0.093}&	\textbf{0.011}	&\textbf{0.038}&	1.000 & \textbf{0.018}&	\textbf{0}	& \textbf{0.001} &\textbf{0.006}&	0.095\\
        50\% & \textbf{0.022}	&\textbf{0.024}	&\textbf{0.002}&	\textbf{0.016}&	0.563 & \textbf{0.010}	&\textbf{0.093}&	\textbf{0.011}&	\textbf{0.038}&1.000& \textbf{0.007}&	\textbf{0}	&\textbf{0.001}&	\textbf{0.002}	&0.225\\
        \midrule
        60\%  & \textbf{0.014}&	\textbf{0.019}	&\textbf{0.001}&	\textbf{0.011}&	0.794& \textbf{0.010}&	\textbf{0.093}&	\textbf{0.011}	&\textbf{0.038} &1.000&\textbf{ 0.002}&	\textbf{0}&	\textbf{0}	&\textbf{0.001}	&1.000\\
        70\%  & \textbf{0.012}&	\textbf{0.019}&	\textbf{0.001}&	\textbf{0.011}	&0.844& \textbf{0.010}	&\textbf{0.093}	&\textbf{0.011}&	\textbf{0.038}&1.000&\textbf{0.002}	&\textbf{0}&	\textbf{0}	&\textbf{0.001}	&1.000\\

        80\%  &  \textbf{0.011}&	\textbf{0.016}&	\textbf{0.001}	&\textbf{0.009}	&0.964& \textbf{0.010}&	\textbf{0.093}&	\textbf{0.011}&	\textbf{0.038}&1.000& \textbf{0.002}&\textbf{0}&	\textbf{0}&	\textbf{0.001}&	1.000\\
        90\%  &  \textbf{0.011}	&\textbf{0.015}	&\textbf{0.001}&	\textbf{0.009}	&1.000& \textbf{0.010}&	\textbf{0.09}3&	\textbf{0.011}&	\textbf{0.038}&1.000& \textbf{0.002}&	\textbf{0}&	\textbf{0}&	\textbf{0.001}&	1.000\\
        100\%  &  \textbf{0.011}&\textbf{	0.015}&\textbf{	0.001}	&\textbf{0.009}	&1.000& \textbf{0.010}&\textbf{0.093}	&\textbf{0.011}	&\textbf{0.038}&1.000&\textbf{0.002}&\textbf{0}&	\textbf{0}	&\textbf{0.001}	&1.000\\
    \bottomrule
    \end{tabular}}
\end{table*}

\noindent \textbf{Baselines} We compare our model with the state-of-the-art molecule generation models. The baselines include SOTA auto-regressive models: GraphAF \cite{shi2020graphaf} is a flow-based model, and GraphDF \cite{luo2021graphdf} is a flow-based model using discrete latent variables. Following GDSS \cite{jo2022score}, we modify the architecture of GraphAF and GraphDF to consider formal charges in the molecule generation, denoted as GraphAF+FC and GraphDF+FC, for fair comparisons. For the one-shot model, we include MoFlow \cite{zang2020moflow}, which is a flow-based model; EDP-GNN \cite{niu2020permutation} and GDSS \cite{niu2020permutation} which are both diffusion models.


\noindent \textbf{Results} We show the results in Table \ref{tab:mol}. Evidentally, GSDM achieves the highest performance under most of the metrics.  The highest scores in NSPDK and FCD show that GSDM is able to generate molecules that have close data distributions to the real molecules in both the chemical space and graph space. Especially, our model outperforms GDSS, in most of the metrics, verifying that our proposed spectral diffusion is not only suitable for generic graph generation but also advisable for molecule designs. Apart from the visualization of generated graphs in Figure \ref{fig:gen}, visualizations of generated molecules are included in Appendix~\ref{molecule_visualization}, showing that the proposed GSDM is capable of generating generic graphs as well as molecules of high quality.

\noindent \textbf{Time Complexity} One of the main advantages of our GSDM compared to other diffusion models (e.g. EDP-GNN and GDSS) is the efficiency in generating molecules. We show the time spent (in seconds) for generating 10,000 molecules in Table \ref{tab:mol}, demonstrating that our GSDM takes a significantly lower time in the inference process compared to EDP-GNN and GDSS. Especially, compared to GDSS which requires $1.06e^3$ and $2.11e^3$ seconds to generate 10,000 molecule graphs according to QM9 and Zinc250k, our model only takes 18.02 and 45.91 seconds, which are $\textbf{58}\times$ and $\textbf{46}\times$ faster respectively. This phenomenon verifies our methodology in designing the spectral diffusion that not only the spectral diffusion is more effective than the whole space diffusion, but also more efficient. Such an improvement is crucial for numerous applications such as drug design and material analysis.

\subsection{Ablation Studies}

To evaluate the effectiveness of GSDM, we conduct extensive ablation experiments and display elaborate results and discussions in this section.

\begin{figure}
    \centering
    \includegraphics[width=0.85\columnwidth]{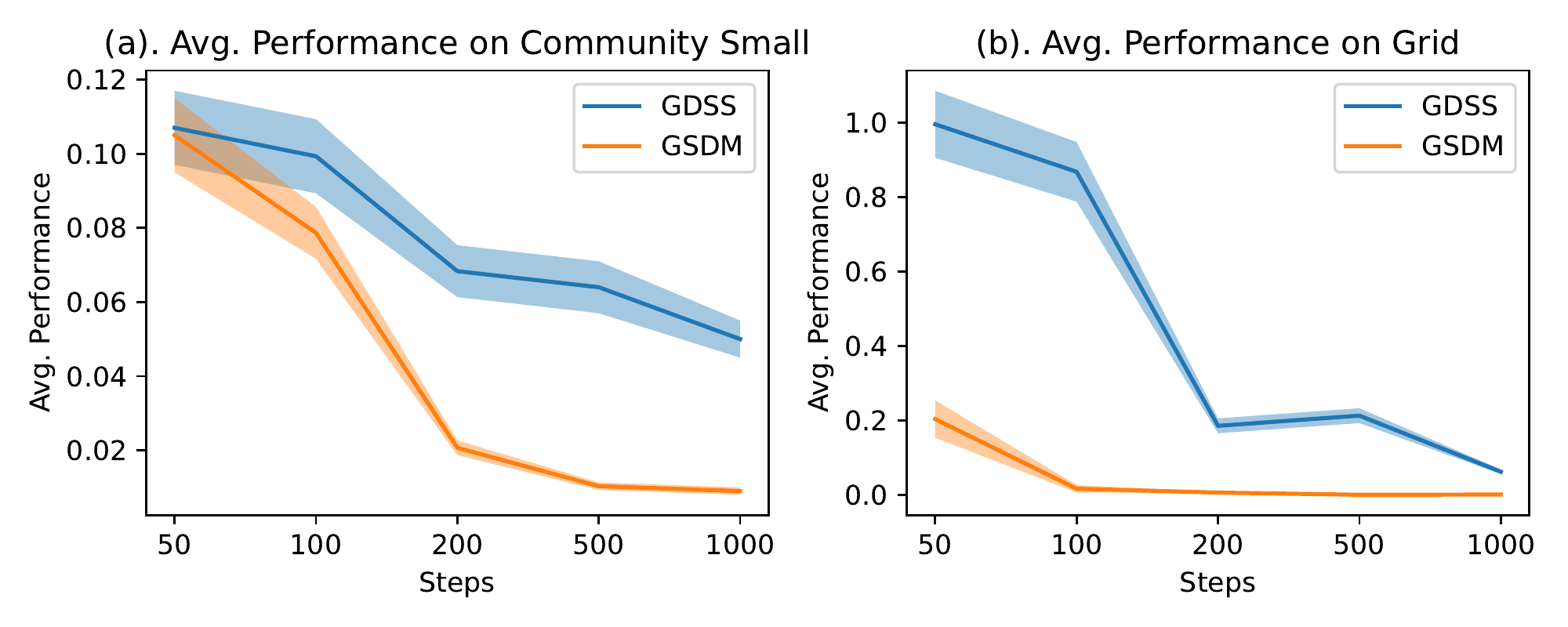}
    \caption{Ablation studies on different diffusion step numbers.}
    \label{fig:steps}
\end{figure}

\noindent \textbf{Ablation studies on the number of diffusion steps.} To analyse the robustness of our proposed GSDM during the reconstruction procedure (i.e. reverse diffusion process), we compare the performance of our model with GDSS on varying numbers of diffusion steps using the Community small and Grid dataset, which represents small and large generic graphs respectively. The default steps for our model and GDSS during training and evaluation are 1,000 steps. Ideally, for diffusion models, with the reduction in step numbers, the performance of the models decreases, while the model of higher robustness still exhibits a higher performance under different numbers of diffusion steps. The results are shown in Figure \ref{fig:steps}. We can observe that GSDM outperforms GDSS in both datasets under different diffusion steps. Specifically, in the Grid dataset, GSDM consistently exhibits significantly higher performance than GDSS. In the community small dataset, although GSDM and GDSS achieve similar performance at 50 steps, GSDM becomes much more accurate when the number of steps is higher than 100.

\begin{figure}
    \centering
    \includegraphics[width=0.85\columnwidth]{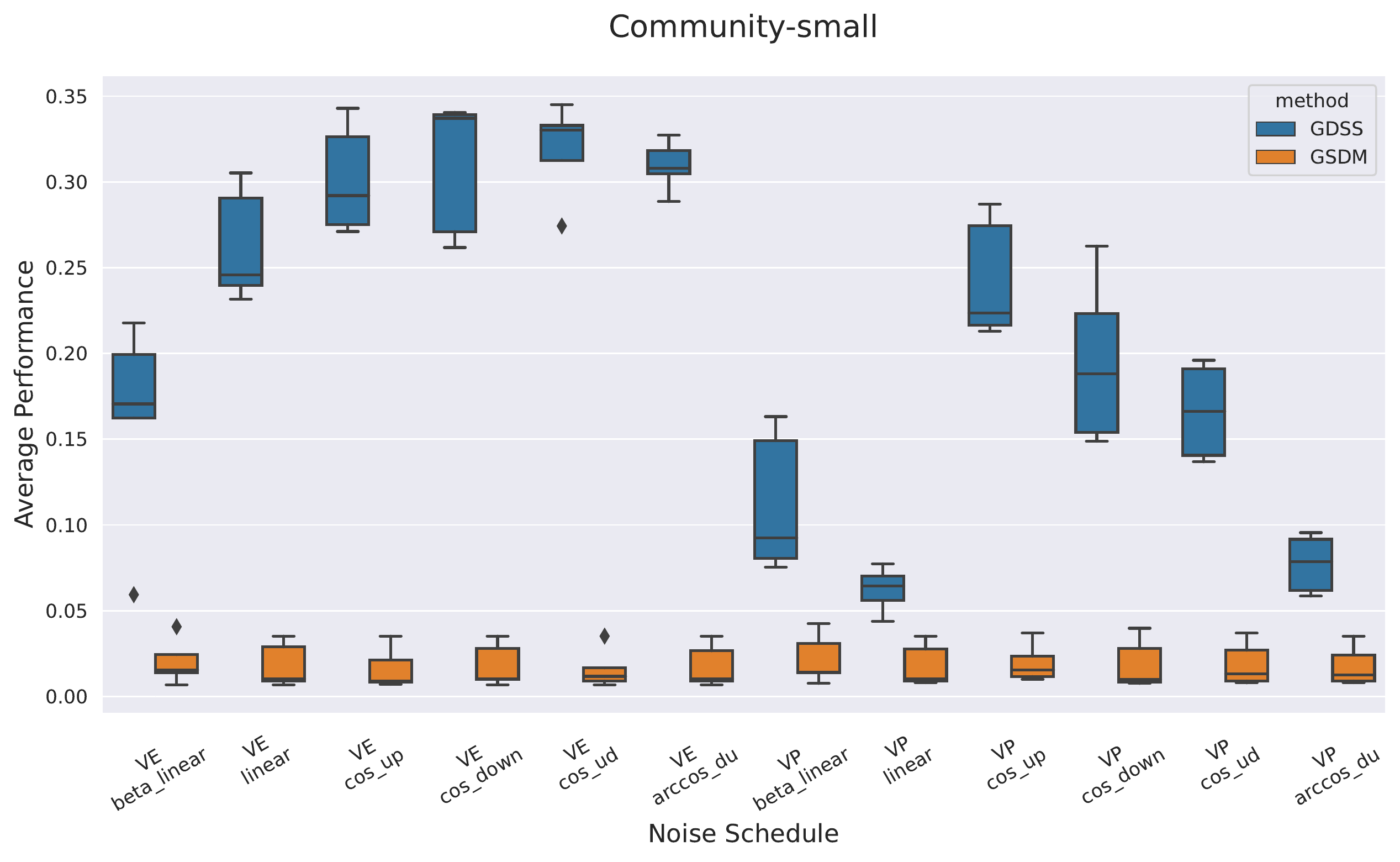}
    \caption{Ablation studies on diffusion schedules. The y-axis denotes the average performance score ($\downarrow$) and each column denotes a diffusion schedule configuration. Each box plot summarizes the results of 5 random trials.} 
    \label{diff_schedule_community_small}
\end{figure}

\noindent \textbf{Ablation studies on the type of diffusion schedules.} As mentioned in \cite{song2020score,improved_ddpm} and \cite{variational_DM}, the performance of diffusion models highly relies on the diffusion schedule, i.e.  the scheduled noise insertion process or discretization of certain types of SDEs, which lies at the heart of both training and inference phases. According to the taxonomy proposed in \cite{song2020score} and \cite{song_ncsn}, mainstream artificial diffusion schedules are categorized 
to be either Variance Exploding (VE) or Variance Preserving (VP). To improve model robustness against the choice of diffusion schedules, \cite{variational_DM} and \cite{improved_ddpm} designed diffusion models with optimizable diffusion schedules. In this subsection, we conduct systematic experiments on several datasets to test the robustness of our proposed GSDM against different diffusion schedules. As illustrated in Figure \ref{diff_schedule_community_small}, the state-of-the-art performance of our GSDM is immune to the choice of diffusion schedule and it frees us from tedious hyperparameter searching. When comparing with GDSS, our GSDM exhibits evidently better average performance with much smaller variance across various configurations. More results and experiment details are included in Appendix \ref{diffusion_schedule_ablation}.

\begin{figure}
     \centering
     \begin{subfigure}[b]{0.32\linewidth}
         \centering
         \includegraphics[width=\textwidth]{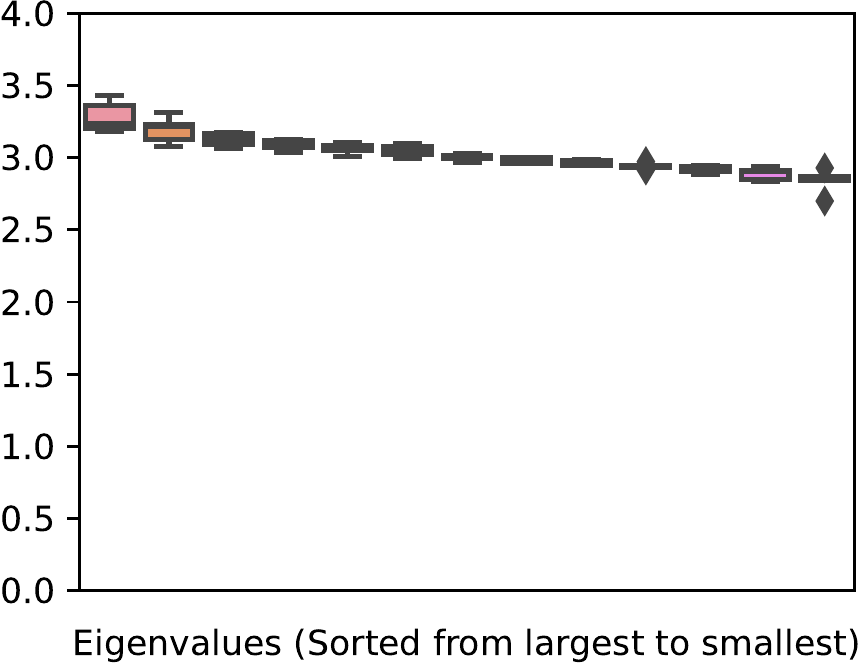}
         \caption{Synthetic-1}
     \end{subfigure}
     \hfill
     \begin{subfigure}[b]{0.32\linewidth}
         \centering
         \includegraphics[width=\textwidth]{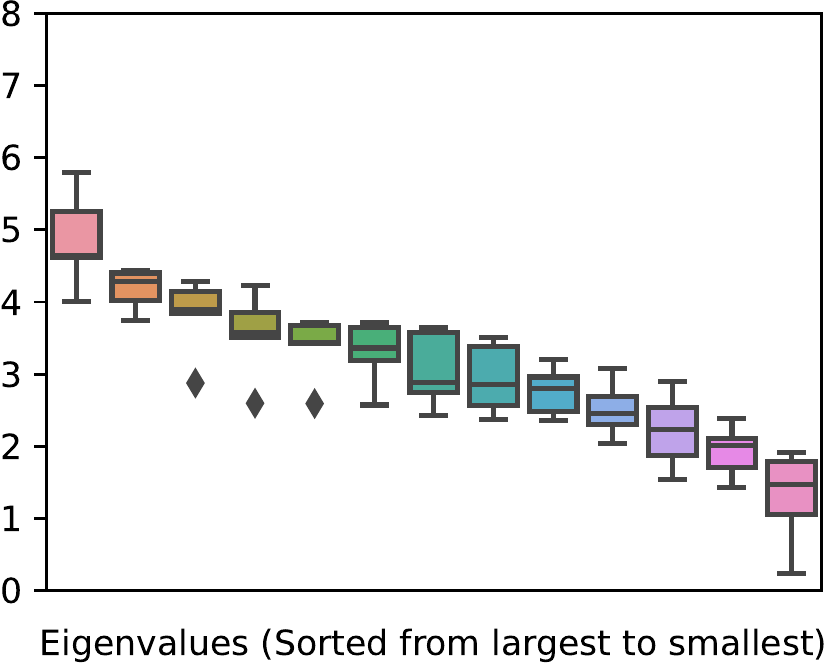}
         \caption{Synthetic-2}
     \end{subfigure}
     \hfill
     \begin{subfigure}[b]{0.32\linewidth}
         \centering
         \includegraphics[width=\textwidth]{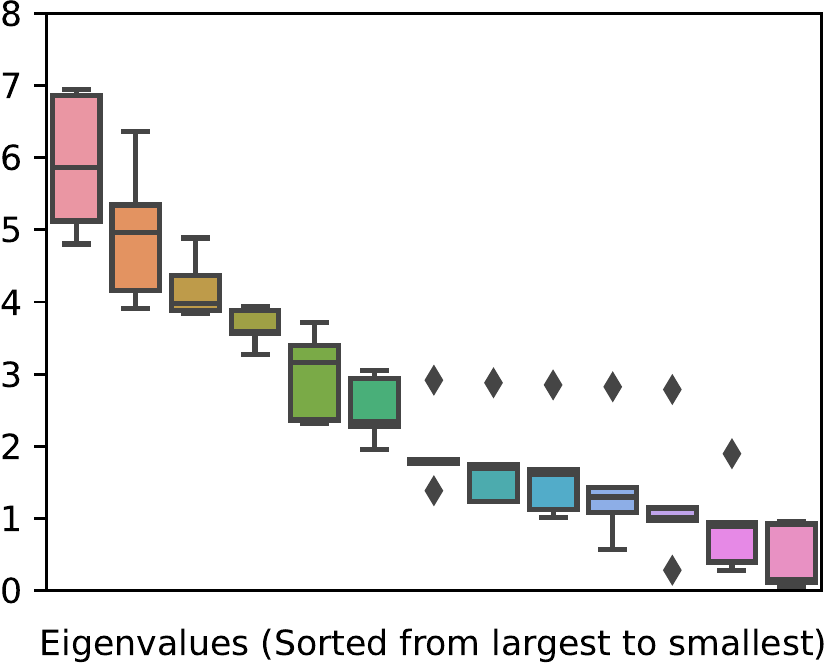}
         \caption{Synthetic-3}
     \end{subfigure}
        \caption{The eigenvalue distribution of 3 synthetic datasets.}
        \label{fig:three synthetic}
\end{figure}

\begin{figure}
    \centering
    \includegraphics[width=0.85\columnwidth]{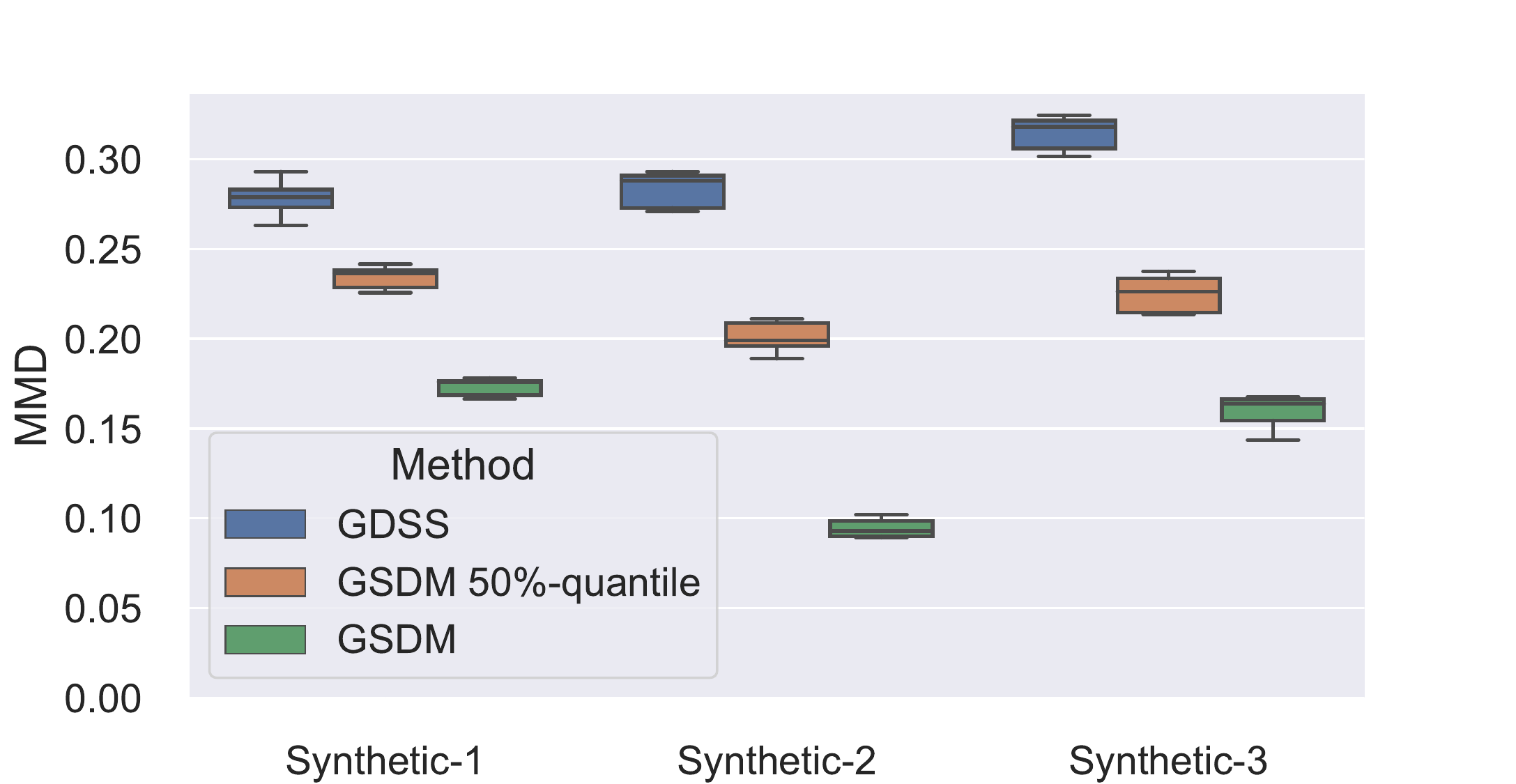}
    \caption{Experiments on synthetic datasets of which the eigenvalues' distributions are shown in Figure \ref{fig:three synthetic}. The y-axis denotes the MMD ($\downarrow$) of the adjacency matrices between the generated graphs to the test graphs. }
    \label{fig:synthetic_results}
\end{figure}

\noindent \textbf{Ablation studies on low-rank approximations of the score-function.} As mentioned in Section 4.2, one can further accelerate the training and sampling phases, by confining diffusion to the partial spectrum of the graph adjacency matrix. Such variants are coined as $\alpha$-quantile GSDM in Section 4.2. In this experiment, we select the top $[\alpha n]$ eigenvalues to conduct the sampling process with a well-trained GSDM, where $\alpha$ is chosen from $\{0.1,0.2,0.3,0.4,0.5,0.6,0.7,0.8,0.9,1.0\}$. The results are shown in Table \ref{tab:eigen}. 

Numerical experiments prove that the low-rank $\alpha$-quantile GSDM retains the state-of-the-art performance among most of the baseline methods. Especially, for the Enzymes dataset, by merely preserving the top-30\% eigenvalues and eigenvectors, the $0.3$-quantile GSDM can still retain 98.2\% of the performance of the full GSDM. The ablation study justifies that our proposed GSDM is capable of capturing essential knowledge that resides on low-dimensional manifolds. It further demonstrates that $\alpha$-quantile GSDM enables outstanding scalability for large-scale datasets.

\noindent \textbf{Ablation studies on graphs with different eigenvalues distributions.} To analyze how GSDM performs under graphs with different eigenvalue distributions, we randomly generate 3 types of synthetic graphs with eigenvalues distributions shown in Figure \ref{fig:three synthetic}: synthetic-1 has evenly distributed eigenvalues, and synthetic-2 and synthetic-2 have moderate and high distinctions in eigenvalues. Due to the predefined eigenvalue distributions, the edge values of the synthetic graphs are not limited to $\{0,1\}$. Thus instead of using the metrics presented in Section 5.1, we directly compute the MMD of the adjacency matrices between the generated graphs and the test graphs. Corresponding results compared to GDSS are shown in Figure \ref{fig:synthetic_results}. We also include an ablation variant of GSDM by using only top-50$\%$ quantile eigenvalues. Both GSDM and its ablation variant outperform the GDSS model under this synthetic setting.




\section{Conclusion}

In this paper, we present a novel graph diffusion model named GSDM, which consists of diffusion methods on both graph node features and topologies through SDEs. Specifically, for graph topology diffusion, we design a novel spectral diffusion model in GSDM, to improve the accuracy of predicting score functions, and avoid the pitfalls of the conventional diffusion processes on graphs. Furthermore, we provide theoretical analysis to justify the advantages in effectiveness and efficiency of GSDM compared to the standard graph diffusion. To validate our proposed model, we conducted extensive experiments showing that our proposed GSDM outperforms the state-of-the-art baseline methods on both generic graph generation and molecule generation with significantly higher processing speed.


\bibliography{ref}
\bibliographystyle{IEEEtran}

\appendices

\section{Proof Details}
\subsection{Proof of Proposition \ref{Prop_spectral_SDE_adj}} \label{proof_prop1}

\begin{proof}
Plugging $\mathbf{A}_t = \mathbf{U}_0 \bm{\Lambda}_t \mathbf{U}_0^\top$ into \eqref{prop1_lam_sde} yields 
\begin{align*}
    \mathrm{d} \mathbf{A}_t
    = &
    -\frac{1}{2} \sigma_{\Lambda,t}^2 
    \mathbf{A}_t
    \mathrm{d}t
    +
    \sigma_{\Lambda,t} \mathrm{d} \mathbf{M}_t,
    \\
    \mathbf{M}_t
    \triangleq &
    \mathbf{U}_0 \mathbf{W}_t^{\Lambda} \mathbf{U}_0^\top
    =
    \mathbf{U}_0 \cdot \mathrm{diag}(\mathbf{B}_t^{\Lambda}) 
    \cdot \mathbf{U}_0^\top.
\end{align*}

Since $(\mathbf{M}_t)_{t\in\R}$ is a linear transformation of $(\mathbf{B}_t^{\Lambda})_{t\in\R}$, the standard Brownian motion on $\R^n$, hence $(\mathbf{M}_t)_{t\in\R}$ is a rank-$n$ centered Gaussian process in $\R^{n\times n}$. Moreover, $\mathbf{M}_t$ is characterized by its covariance kernel $\mathcal{K}(s,t): [0,1]\times [0,1] \mapsto \mathbb{R}^{n\times n \times n\times n}$, which is given by
\begin{small}
\begin{align*}
        & \mathcal{K}(s,t)_{i,j,k,l}
        \\
        & \triangleq
        \mathrm{Cov}(
            \mathbf{M}_s[i,j],
            \mathbf{M}_t[k,l]
        )
        =
        \mathbb{E}[
            \mathbf{M}_s[i,j]
            \mathbf{M}_t[k,l]
        ]
        \\
        & =
        \mathbb{E}
        \left(
            \sum_{h=1}^n
            \mathbf{U}_0[i,h] 
            \mathbf{B}_s^{\Lambda}[h]
            \mathbf{U}_0[j,h]
        \right)
        \left(
            \sum_{h=1}^n
            \mathbf{U}_0[k,h] 
            \mathbf{B}_t^{\Lambda}[h]
            \mathbf{U}_0[l,h]
        \right)
        \\
        & =
        \mathbb{E}
        \left(
            \sum_{h=1}^n
            \mathbf{B}_s^{\Lambda}[h] \mathbf{B}_t^{\Lambda}[h]
            \mathbf{U}_0[i,h] 
            \mathbf{U}_0[j,h] 
            \mathbf{U}_0[k,h] 
            \mathbf{U}_0[l,h] 
        \right)
        \\
        & =
        \sum_{h=1}^n
        \mathbb{E}(\mathbf{B}_s^{\Lambda}[h] \mathbf{B}_t^{\Lambda}[h])
        \mathbf{U}_0[i,h] 
        \mathbf{U}_0[j,h] 
        \mathbf{U}_0[k,h] 
        \mathbf{U}_0[l,h] 
        \\
        & =
        \min(s, t)
        \sum_{h=1}^n
        \mathbf{U}_0[i,h] 
        \mathbf{U}_0[j,h] 
        \mathbf{U}_0[k,h] 
        \mathbf{U}_0[l,h].
    \end{align*}  
\end{small}

 Notice that \eqref{spectral_fwd_sde} is a Ornstein–Uhlenbeck process, which admits a closed-form solution
    \begin{align}
        \bm{\Lambda}_t
        =
        \bm{\Lambda}_0 
        e^{-\frac{1}{2} \int_0^t \sigma_{\tau}^2 \mathrm{d} \tau}
        +
        (1 - e^{- \int_0^t \sigma_{\tau}^2 \mathrm{d} \tau}) \mathbf{W}_1^{\Lambda}.
        \label{solution_Lam_t}
    \end{align}
    Hence, plugging \eqref{solution_Lam_t} into $\mathbf{A}_t=\mathbf{U}_0 \bm{\Lambda}_t \mathbf{U}_0^\top$ yields
    \begin{align}
        \mathbf{A}_t
        =
        \mathbf{A}_0
        e^{-\frac{1}{2} \int_0^t \sigma_{\tau}^2 \mathrm{d} \tau}
        +
        (1 - e^{- \int_0^t \sigma_{\tau}^2 \mathrm{d} \tau}) \mathbf{M}_1.
    \end{align}
This completes the proof.
\end{proof}

\subsection{Proof of Proposition \ref{prop2_error_bound}} \label{proof_prop2}

In preparation of the main proof, we first establish some technical lemmas.

\begin{lemma}[Reconstruction Bound for Generic Diffusion]
\label{lemma_generic_reconstruction_bound}
    We first consider the following oracle reversed time SDE on $(\R^d, \|\cdot\|)$ 
    \begin{equation*}
        \mathrm{d} \bar{\mathbf{Z}}_t
        =
        \left(
            \mathbf{f}(\bar{\mathbf{Z}}_t,t)
            -
            \sigma_t^2
            \nabla_{\mathbf{Z}} 
            \log p_t(\bar{\mathbf{Z}}_t)
        \right)
        \mathrm{d} \bar{t}
        +
        \sigma_t
        \mathrm{d} \bar{\mathbf{B}}_t,\
        t\in [0,1],
    \end{equation*}
    and we define the corresponding estimated reverse time SDE as
    \begin{equation*}
        \mathrm{d} \widehat{\mathbf{Z}}_t
        =
        \left(
            \mathbf{f}(\widehat{\mathbf{Z}}_t,t)
            -
            \sigma_t^2
            s_{\bm{\phi}}(\widehat{\mathbf{Z}}_t, t)
        \right)
        \mathrm{d} \bar{t}
        +
        \sigma_t
        \mathrm{d} \bar{\mathbf{B}}_t,\
        t\in [0,t],
    \end{equation*}
    where $s_{\bm{\phi}}(\cdot)$ is optimized to predict the Stein score function $\nabla_{\mathbf{Z}} \log p_t(\mathbf{Z}_t)$ by minimizing the score matching objective
    \begin{equation*}
        \min_{\bm{\phi}}
        \mathcal{E}(\bm{\phi})
        \triangleq
        \E_{\mathbf{Z}}
        \E_{\mathbf{Z}_t|\mathbf{Z}}
        \|
            s_{\bm{\phi}}(\mathbf{Z}_t, t) 
            - 
            \nabla_{\mathbf{Z}} \log p_{t}(\mathbf{Z}_t)
        \|^2.
    \end{equation*}
    Then for any $\bm{\phi}$, the construction error is bounded by
    \begin{align}
    \label{generic_reconstruction_bound}
        \E \|
            \mathbf{Z}_0 - \widehat{\mathbf{Z}}_0
        \|^2
        \leqslant &
        C^2 
        \|\sigma_{\cdot}\|_{\infty}^4
        \mathcal{E}(\bm{\phi})
        \notag
        \\
        &
        \cdot
        \left(
            1
            +
            \int_0^1
                F(t) 
                \exp
                \left(
                    \int_t^1
                        F(s)
                    \mathrm{d}s
                \right)
            \mathrm{d}t
        \right),
    \end{align}
    where
    $
    F(t) 
    \triangleq 
    C^2
    \sigma_t^4
    \|s_{\bm{\phi}}(\cdot,t) \|_{\mathrm{lip}}^2
    +
    C
    \|\mathbf{f}(\cdot, t)\|_{\mathrm{lip}}^2
    $ and $C$ is a constant.
\end{lemma}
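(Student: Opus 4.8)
The plan is to control the reconstruction error $\E\|\mathbf{Z}_0 - \widehat{\mathbf{Z}}_0\|^2$ by a Gronwall-type argument applied to the error process $\mathbf{e}_t \triangleq \bar{\mathbf{Z}}_t - \widehat{\mathbf{Z}}_t$, where $\bar{\mathbf{Z}}_t$ solves the oracle reverse-time SDE and $\widehat{\mathbf{Z}}_t$ solves the estimated one, both driven by the \emph{same} reverse-time Brownian motion $\bar{\mathbf{B}}_t$ and started from the same terminal law at $t=1$. Since both SDEs share the diffusion coefficient $\sigma_t$, the stochastic integrals cancel in $\mathrm{d}\mathbf{e}_t$, leaving a pure drift ODE (pathwise): $\mathrm{d}\mathbf{e}_t = \big[(\mathbf{f}(\bar{\mathbf{Z}}_t,t) - \mathbf{f}(\widehat{\mathbf{Z}}_t,t)) - \sigma_t^2(\nabla\log p_t(\bar{\mathbf{Z}}_t) - s_{\bm{\phi}}(\widehat{\mathbf{Z}}_t,t))\big]\mathrm{d}\bar t$. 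First I would split the score term as $\nabla\log p_t(\bar{\mathbf{Z}}_t) - s_{\bm{\phi}}(\widehat{\mathbf{Z}}_t,t) = \big(\nabla\log p_t(\bar{\mathbf{Z}}_t) - \nabla\log p_t(\widehat{\mathbf{Z}}_t)\big) + \big(\nabla\log p_t(\widehat{\mathbf{Z}}_t) - s_{\bm{\phi}}(\widehat{\mathbf{Z}}_t,t)\big)$, i.e. a Lipschitz-in-argument piece plus a genuine score-matching residual.

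Next I would compute $\tfrac{\mathrm{d}}{\mathrm{d}t}\E\|\mathbf{e}_t\|^2$ (running backwards from $t=1$, where $\mathbf{e}_1=0$), apply Cauchy--Schwarz and Young's inequality $2\langle a,b\rangle \le \|a\|^2 + \|b\|^2$ (or the weighted version), and use the Lipschitz assumptions $\|\mathbf{f}(\cdot,t)\|_{\mathrm{lip}}$ and $\|s_{\bm{\phi}}(\cdot,t)\|_{\mathrm{lip}}$ — noting that the oracle score $\nabla\log p_t$ inherits a Lipschitz bound comparable to $\|s_{\bm{\phi}}(\cdot,t)\|_{\mathrm{lip}}$ under the stated assumption, or is handled by the same constant $C$. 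This yields a differential inequality of the form $\tfrac{\mathrm{d}}{\mathrm{d}t}\E\|\mathbf{e}_t\|^2 \ge -F(t)\,\E\|\mathbf{e}_t\|^2 - G(t)$ for the reversed time direction, where $G(t)$ collects the score-matching residual term $C^2\sigma_t^4\,\E\|\nabla\log p_t(\widehat{\mathbf{Z}}_t) - s_{\bm{\phi}}(\widehat{\mathbf{Z}}_t,t)\|^2$ and $F(t)$ is exactly the quantity defined in the statement, $F(t) = C^2\sigma_t^4\|s_{\bm{\phi}}(\cdot,t)\|_{\mathrm{lip}}^2 + C\|\mathbf{f}(\cdot,t)\|_{\mathrm{lip}}^2$. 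Integrating $G$ over $[0,1]$ should reproduce (a constant multiple of) the score-matching objective $\mathcal{E}(\bm{\phi})$, possibly after bounding $\E\|\nabla\log p_t(\widehat{\mathbf{Z}}_t) - s_{\bm{\phi}}(\widehat{\mathbf{Z}}_t,t)\|^2$ in terms of $\E_{\mathbf{Z}_t|\mathbf{Z}}\|s_{\bm{\phi}}(\mathbf{Z}_t,t) - \nabla\log p_t(\mathbf{Z}_t)\|^2$ — this is where the absolute constant $C$ absorbs a change-of-measure / Radon--Nikodym factor between $\mathrm{law}(\widehat{\mathbf{Z}}_t)$ and $p_t$.

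Then I would apply Gronwall's inequality (integral form, run from $t=1$ down to $t=0$) to the differential inequality, obtaining $\E\|\mathbf{e}_0\|^2 \le \int_0^1 G(t)\exp\!\big(\int_t^1 F(s)\,\mathrm{d}s\big)\,\mathrm{d}t$, and finally replace $G(t)$ by $C^2\|\sigma_\cdot\|_\infty^4\,\mathcal{E}(\bm{\phi})$ times the appropriate $F$-factor to land on the advertised bound $C^2\|\sigma_\cdot\|_\infty^4\,\mathcal{E}(\bm{\phi})\big(1 + \int_0^1 F(t)\exp(\int_t^1 F(s)\,\mathrm{d}s)\,\mathrm{d}t\big)$; the leading "$1+$" comes from a trivial lower-order term (e.g. the $t=1$ contribution or a direct bound on the residual integral before the Gronwall amplification). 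The main obstacle I anticipate is \emph{not} the Gronwall mechanics but rather rigorously justifying the step that converts the reconstruction residual $\E\|\nabla\log p_t(\widehat{\mathbf{Z}}_t) - s_{\bm{\phi}}(\widehat{\mathbf{Z}}_t,t)\|^2$ — which is an expectation under the \emph{learned} reverse process — into the score-matching error $\mathcal{E}(\bm{\phi})$, which is an expectation under the \emph{forward} marginals $p_t$; controlling this distributional mismatch (and absorbing it into the constants $C, L, M$) is the delicate part, and it presumably relies on a bounded-likelihood-ratio assumption or on the very Gronwall bound being derived, requiring a careful bootstrap. A secondary technical point is ensuring the oracle score's Lipschitz constant is genuinely dominated by $\|s_{\bm{\phi}}(\cdot,t)\|_{\mathrm{lip}}$ up to constants, which is what makes $F(t)$ depend only on the network's Lipschitz norm rather than on an unbounded score regularity.
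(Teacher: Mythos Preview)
Your overall Gr\"onwall strategy is exactly right, and the paper proceeds the same way: write $\bar{\mathbf{Z}}_t - \widehat{\mathbf{Z}}_t$ as a pure drift integral (diffusions cancel), square, take expectations, bound, and apply Gr\"onwall. The gap is in your choice of \emph{where to split} the score term, and this single choice is what manufactures both of the obstacles you worry about at the end.

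You split at $\widehat{\mathbf{Z}}_t$:
\[
\nabla\log p_t(\bar{\mathbf{Z}}_t) - s_{\bm{\phi}}(\widehat{\mathbf{Z}}_t,t)
= \big(\nabla\log p_t(\bar{\mathbf{Z}}_t) - \nabla\log p_t(\widehat{\mathbf{Z}}_t)\big)
+ \big(\nabla\log p_t(\widehat{\mathbf{Z}}_t) - s_{\bm{\phi}}(\widehat{\mathbf{Z}}_t,t)\big).
\]
This forces you to (i) control the Lipschitz constant of the \emph{oracle} score $\nabla\log p_t$, and (ii) evaluate the score-matching residual under $\mathrm{law}(\widehat{\mathbf{Z}}_t)$, which is \emph{not} $p_t$, hence your change-of-measure/bootstrap worry. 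Neither step is available from the hypotheses of the lemma, and neither is needed.

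The paper splits at $\bar{\mathbf{Z}}_t$ instead:
\[
s_{\bm{\phi}}(\widehat{\mathbf{Z}}_s,s) - \nabla\log p_s(\bar{\mathbf{Z}}_s)
= \big(s_{\bm{\phi}}(\widehat{\mathbf{Z}}_s,s) - s_{\bm{\phi}}(\bar{\mathbf{Z}}_s,s)\big)
+ \big(s_{\bm{\phi}}(\bar{\mathbf{Z}}_s,s) - \nabla\log p_s(\bar{\mathbf{Z}}_s)\big).
\]
Now the Lipschitz piece uses $\|s_{\bm{\phi}}(\cdot,s)\|_{\mathrm{lip}}$ directly --- which is precisely why $F(t)$ in the statement depends on the \emph{network's} Lipschitz norm and not the oracle's. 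And the residual is evaluated at $\bar{\mathbf{Z}}_s$; since the oracle reverse-time SDE reproduces the forward marginals, $\bar{\mathbf{Z}}_s \sim p_s$, so
\[
\E\|s_{\bm{\phi}}(\bar{\mathbf{Z}}_s,s) - \nabla\log p_s(\bar{\mathbf{Z}}_s)\|^2
\]
is \emph{exactly} the inner expectation defining $\mathcal{E}(\bm{\phi})$, with no change of measure required. Integrating gives $G(t)=C^2\int_1^t \sigma_s^4\,\mathrm{d}\bar s\cdot\mathcal{E}(\bm{\phi})\le C^2\|\sigma_\cdot\|_\infty^4\,\mathcal{E}(\bm{\phi})$, and Gr\"onwall (integral form) yields the stated bound, the ``$1+$'' coming from the $G(0)$ term before amplification. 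Swap your decomposition and both of your anticipated difficulties disappear.
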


\begin{proof}[Proof of Proposition \ref{prop2_error_bound}]
    Assumptions of Proposition \ref{prop2_error_bound} imply 
    \begin{align*}
        \|\mathbf{f}(\cdot, t)\|_{\mathrm{lip}}
        = &
        \frac{1}{2} \sigma_t^2,
        \\
        \|s_{\bm{\phi}}(\cdot, t)\|_{\mathrm{lip}}
        \leqslant &
        L\cdot \E_{|\mathbf{A}_0}
        \|
        \nabla \log p_{t|0}(\widehat{\mathbf{A}}_t^{\mathrm{full}})
        \|_{\mathrm{lip}},
        \\
        \|s_{\bm{\varphi}}(\cdot, t)\|_{\mathrm{lip}}
        \leqslant &
        L\cdot \E_{|\mathbf{A}_0}
        \|
        \nabla \log p_{t|0}(\widehat{\mathbf{A}}_t^{\mathrm{spec}})
        \|_{\mathrm{lip}},
    \end{align*}
    where $L$ is a constant.
    According to \cite{song_ncsn,song2020score}, at each time step $t\in [0,1]$, the oracle conditional score function $\nabla \log p_{t|0}(\cdot)$ is equivalent to a mapping that maps the input $d$-dimensional random variable $\mathbf{Z}_t$ to $\Sigma_t^{-1} \bm{\varepsilon}$. The shown up $\bm{\varepsilon}$ is a $d$-dimensional standard Gaussian vector that is independent to $\mathbf{Z}_t$, and $\Sigma_t$ is the standard deviation of $\mathbf{Z}_t$, which is given by 
    $
    \Sigma_t
    \triangleq
    \left(
        1 - e^{
            -
            \int_0^t
                \sigma_s^2
            \mathrm{d}s
        }
    \right)^{\frac{1}{2}}.
    $
    Thus, the expected Lipschitz norm of oracle conditional score function is bounded by
    \begin{align}
        \left(
            \E_{|\mathbf{A}_0}
            \|
                \nabla \log p_{t|0}(\mathbf{Z}_t)
            \|_{\mathrm{lip}}
        \right)^2
        & \leqslant 
        \E_{|\mathbf{A}_0}
        \|
            \nabla \log p_{t|0}(\mathbf{Z}_t)
        \|_{\mathrm{lip}}^2 \nonumber
        \\
        & \leqslant
        \frac{\Sigma_t^{-2}}{\|\mathbf{Z}_0\|^2}
        \E
         \|\bm{\varepsilon}\|^2
         =
         \frac{\Sigma_t^{-2}d}{\|\mathbf{Z}_0\|^2}.
         \label{lip_bound}
    \end{align}
    With Lemma \ref{lemma_generic_reconstruction_bound} in hand, we only need to plug \eqref{lip_bound} into \eqref{generic_reconstruction_bound}, by substituting the notations $(\mathbf{Z}, d, \| \cdot\|)$ with $(\widehat{\mathbf{A}}_t^{\mathrm{full}}, n^2, \|\cdot\|_{2,2})$ and $(\widehat{\bm{\Lambda}}_t^{\mathrm{spec}}, n, \|\cdot\|_{2,2})$ to bound the Lipschitz norm of the score networks. This leads us to

\begin{align*}
    & \E\|
    \mathbf{A}_0
    -
    \widehat{\mathbf{A}}_0^{\mathrm{full}}
    \|^2
    \\
    & \leqslant
    M
    \mathcal{E}(\bm{\phi})
    \cdot
    \left(
        1
        +
        n^2 K
        \cdot
        \int_0^1
            \Sigma_t^{-2}
            \exp
            \left(
            n^2 K
            \int_t^1
                \Sigma_s^{-2}
            \mathrm{d}s
            \right)
        \mathrm{d}t
    \right),
    \\
    & \E\|
    \bm{\Lambda}_0
    -
    \widehat{\bm{\Lambda}}_0^{\mathrm{spec}}
    \|^2
    \\
    & \leqslant
    M
    \mathcal{E}(\bm{\phi})
    \cdot
    \left(
        1
        +
        n K
        \cdot
        \int_0^1
            \Sigma_t^{-2}
            \exp
            \left(
            n K
            \int_t^1
                \Sigma_s^{-2}
            \mathrm{d}s
            \right)
        \mathrm{d}t
    \right),
\end{align*}
where $M \triangleq C^2 \|\sigma_{\cdot}\|_{\infty}^4$ and $K\triangleq 2ML/\E
    \|
        \mathbf{A}_0
    \|_{2,2}$. For the spectral diffusion part, the final proof step is completed by the fact that
\begin{align*}
    &\E
    \|
        \mathbf{A}_0
        \!-\!
        \widehat{\mathbf{A}}_0^{\mathrm{spec}}
    \|^2
    \!\!=\! 
    \E
    \|
    \mathbf{U}_0
    \!\left(
        \bm{\Lambda}_0
        \!-\!
        \widehat{\bm{\Lambda}}_0^{\mathrm{spec}}
    \right)\!
    \mathbf{U}_0^\top
    \|^2
    \!=\! 
    \E
    \|
    \bm{\Lambda}_0
    \!-\!
    \widehat{\bm{\Lambda}}_0^{\mathrm{spec}}
    \|^2,
    \\
    &\E
    \|
        \mathbf{A}_0
    \|_{2,2}
    =
    \E
    \|
        \mathbf{U}_0
        \bm{\Lambda}_0
        \mathbf{U}_0^\top
    \|_{2,2}
    =
    \E
    \|
        \bm{\Lambda}_0
    \|_{2,2}.
\end{align*}
This completes the proof.    
\end{proof}

\begin{proof}[Proof of Lemma \ref{lemma_generic_reconstruction_bound}]

To bound the expected reconstruction error $\E\|\mathbf{Z}_0 - \widehat{\mathbf{Z}}_0\|^2$,  we first analysis how $\mathbb{E}\|\bar{\mathbf{Z}}_t - \widehat{\mathbf{Z}}_t\|^2$ evolves as time $t$ is reversed from $1$ to $0$. By definition, it holds that
\begin{small}
    \begin{equation*}
        \bar{\mathbf{Z}}_t \!-\! \widehat{\mathbf{Z}}_t
        \!=\! 
        \int_{1}^t\!\!
            \left(
                \mathbf{f}(\bar{\mathbf{Z}}_s,s)
                \!-\!
                \mathbf{f}(\widehat{\mathbf{Z}}_s,s)
            \right)
            \!+\!
            \sigma_s^2\!
            \left(
                s_{\bm{\phi}}(\widehat{\mathbf{Z}}_{s},s) 
                \!-\! 
                \nabla_{\mathbf{Z}} \log p_s(\bar{\mathbf{Z}}_{s})
            \right)
        \mathrm{d}\bar{s}.
    \end{equation*}
\end{small}
By taking norm and expectation on both sides, we have

\begin{small}
    \begin{align*}
        & \E \|
            \bar{\mathbf{Z}}_t - \widehat{\mathbf{Z}}_t
        \|^2
        \\
        & \leqslant
        \E\!
        \int_{1}^t\!
        \bigg\|\!
            \left(
                \mathbf{f}(\bar{\mathbf{Z}}_s,s)
                \!-\!
                \mathbf{f}(\widehat{\mathbf{Z}}_s,s)
            \right)
            \!+\!
            \sigma_s^2
            \!\bigg(
                s_{\bm{\phi}}(\widehat{\mathbf{Z}}_{s},s) 
                \!-\! 
                \nabla_{\mathbf{Z}} \log p_s(\bar{\mathbf{Z}}_{s})
            \!\bigg)
        \!\bigg\|^2\!
        \mathrm{d}\bar{s}
        \\
        & \leqslant
        C\E
        \int_{1}^t
        \left\|
            \mathbf{f}(\bar{\mathbf{Z}}_s,s)
            -
            \mathbf{f}(\widehat{\mathbf{Z}}_s,s)
        \right\|^2
        \mathrm{d}\bar{s}
        \\
          & \;\;\; +
        C\E
        \int_{1}^t
        \sigma_s^4
        \left\|
            s_{\bm{\phi}}(\widehat{\mathbf{Z}}_{s},s) 
            - 
            \nabla_{\mathbf{Z}} \log p_s(\bar{\mathbf{Z}}_{s}) 
        \right\|^2
        \mathrm{d}\bar{s}
        \\
        \notag
        & \leqslant
        C^2
        \int_{1}^t
        \sigma_s^4
        \cdot
        \E
        \left\|
            s_{\bm{\phi}}(\widehat{\mathbf{Z}}_{s},s) 
            - 
            s_{\bm{\phi}}(\bar{\mathbf{Z}}_{s},s) 
        \right\|^2
        \\
        & \;\;\; +
        \sigma_s^4
        \cdot
        \E
        \left\|
            s_{\bm{\phi}}(\bar{\mathbf{Z}}_{s},s) 
            - 
            \nabla_{\mathbf{Z}} \log p_s(\bar{\mathbf{Z}}_{s})
        \right\|^2
        \mathrm{d}\bar{s}
        \\
        & \;\;\; + C
        \int_{1}^t
        \|\mathbf{f}(\cdot,s)\|_{\mathrm{lip}}^2
        \cdot
        \E
        \left\|
            \bar{\mathbf{Z}}_s
            -
            \widehat{\mathbf{Z}}_s
        \right\|^2
        \mathrm{d}\bar{s}
        \\
        & \leqslant
        \int_{1}^t
        \underbrace{
        \left(
            C^2
            \sigma_s^4
            \|s_{\bm{\phi}}(\cdot,s) \|_{\mathrm{lip}}^2
            +
            C
            \|\mathbf{f}(\cdot, s)\|_{\mathrm{lip}}^2
        \right)
        }_{F(s)}
        \cdot
        \E
        \left\|
            \widehat{\mathbf{Z}}_{s}
            - 
            \bar{\mathbf{Z}}_{s} 
        \right\|^2
        \mathrm{d}\bar{s}
        \\
        & \;\;\; +
        \underbrace{
            C^2 
            \int_{1}^t
                \sigma_s^4
            \mathrm{d}\bar{s}
            \cdot
            \mathcal{E}(\bm{\phi})
        }_{G(t)}.
    \end{align*}
\end{small}

    The proof is completed by applying the Gr\"{o}nwall's inequality to $t\mapsto \mathbb{E}\|\bar{\mathbf{Z}}_t - 
    \widehat{\mathbf{Z}}_t\|^2$, which yields
    
    \begin{align*}
         & \E \|
            {\mathbf{Z}}_0 - \widehat{\mathbf{Z}}_0
        \|^2
        \\
        & \leqslant
        G(0)
        +
        \int_0^1
            G(t) F(t) 
            \exp
            \left(
                \int_t^1
                    F(s)
                \mathrm{d}s
            \right)
        \mathrm{d}t
        \\
        & \leqslant
        C^2 
        \|\sigma_{\cdot}\|_{\infty}^4
        \mathcal{E}(\bm{\phi})
        \cdot
        \left(
            1
            +
            \int_0^1
                F(t) 
                \exp
                \left(
                    \int_t^1
                        F(s)
                    \mathrm{d}s
                \right)
            \mathrm{d}t
        \right).
    \end{align*}
\end{proof}
\subsection{Proof of Proposition~\ref{Prop_sm_converge}}\label{proof_prop3}

\begin{proof}
    Since $\mathcal{E}(\bm{\theta}) = \mathbb{E}_{\mathcal{S}\sim \mathcal{D}^N} \widehat{\mathcal{E}}(\bm{\theta}; \mathcal{S}_N)$, we only need to bound the empirical risk $\widehat{\mathcal{E}}(\bm{\theta}; \mathcal{S}_N)$.
    By assumption, denote
    \begin{align*}
        \mathbf{h}
        \triangleq
        \begin{pmatrix}
            & s_{\bm{\theta}}(\mathbf{Z}^1_0)-\nabla \log p(\mathbf{Z}^1_0|\mathbf{Z}^1_1)^\top
            \\
            & \vdots
            \\
            & s_{\bm{\theta}}(\mathbf{Z}^N_0)-\nabla \log p(\mathbf{Z}^N_0|\mathbf{Z}^N_1)^\top
        \end{pmatrix}
        \in \mathbb{R}^{Nd\times 1},
    \end{align*}
    It holds that
        \begin{equation*}
        \|\nabla_{\bm{\theta}} \widehat{\mathcal{E}}(\bm{\theta}; \mathcal{S}_N) \|^2
        =
        \frac{1}{N^2}
        \mathbf{h}^\top \mathbf{K}_{\bm{\theta}}(\mathcal{S})\mathbf{h}
        \geqslant
        \frac{\lambda}{N^2} \|\mathbf{h}\|^2
        =
        \frac{\lambda}{N} \widehat{\mathcal{E}}(\bm{\theta}; \mathcal{S}_N),
    \end{equation*}
    which implies that $\widehat{\mathcal{E}}(\bm{\theta}; \mathcal{S}_N)$ satisfies the $\frac{\lambda}{N}$-Polyak \L ojasiewicz condition \cite{SGD_convergence}. Then the proof is completed by applying Theorem 7 in \cite{SGD_convergence} to $\widehat{\mathcal{E}}(\bm{\cdot}; \mathcal{S}_N)$.
\end{proof}

\section{Algorithm and Additional Results}
\begin{figure}[h]
\vspace{-0.2in}
\centering
\begin{minipage}{0.9\linewidth}
\centering
\begin{algorithm}[H]
    \small
    \caption{Training GSDM via minimizing score-matching objective}\label{alg:training}
        \textbf{Input:} Score networks $\mathbf{s}_{\bm{\theta},t}(\cdot)$, $\mathbf{s}_{\bm{\phi},t}(\cdot)$, maximal diffusion time $T$, drift functions $\mathbf{f}^X(\cdot,t), \mathbf{f}^{\Lambda}(\cdot,t)$, noise schedules $\sigma_{X,t}, \sigma_{\Lambda,t}$, learning rate $\eta$ and training epochs $K$. \\
        \textbf{Output:} Optimized score network parameters $\bm{\theta}_K, \bm{\phi}_K$.
    \begin{algorithmic}[1]
        \STATE Initialize $\bm{\theta}_0,\bm{\phi}_0$
        \FOR{$k=1$ \textbf{to} $K$}
            \STATE $(\mathbf{X}_0,\mathbf{A}_0) \sim \mathcal{G}$\;
            \STATE $\bm{\Lambda}_0 \gets \mathrm{EigenValues}(\mathbf{A}_0)$\;
            \STATE $t\sim \mathrm{Unif}([0,T])$\;
            \STATE 
            $\mathbf{X}_t 
            \sim
            \int_0^t 
            f^{X}(\mathbf{X}_{\tau},\tau) 
            \mathrm{d} \tau
            +
            \int_0^t
            \sigma_{X,\tau}
            \mathrm{d} \mathbf{B}^X_{\tau}$,\
            $\bm{\Lambda}_t 
            \sim
            \int_0^t 
            f^{\Lambda}(\bm{\Lambda}_{\tau},\tau) 
            \mathrm{d} \tau
            +
            \int_0^t
            \sigma_{\Lambda,\tau}
            \mathrm{d} \mathbf{B}^{\Lambda}_{\tau}$\;
            \STATE
            $\widehat{\mathcal{E}}(\bm{\theta}_k)
            \gets \| s_{\bm{\theta}_k}(\mathbf{X}_t,\bm{\Lambda}_t) - \nabla \log p_{t|0}(\mathbf{X}_t|
            \mathbf{X}_0) \|^2$\;
            \STATE
            $\widehat{\mathcal{E}}(\bm{\phi}_k)
            \gets \| s_{\bm{\phi}_k}(\mathbf{X}_t,\bm{\Lambda}_t) - \nabla \log p_{t|0}(\bm{\Lambda}_t|
            \bm{\Lambda}_0) \|^2$\;
            \STATE
            $(\bm{\theta}_{k+1}, \bm{\phi}_{k+1})
            \gets (\bm{\theta}_{k}, \bm{\phi}_{k})
            -
            \eta (\nabla \widehat{\mathcal{E}}(\bm{\theta}_k), \nabla \widehat{\mathcal{E}}(\bm{\phi}_k))$\;
        \ENDFOR
        \STATE \textbf{Return:} $\bm{\theta}_{K}, \bm{\phi}_{K}$\;
    \end{algorithmic}
    \label{alg_train}
\end{algorithm}
\end{minipage}

\end{figure}

\begin{figure}[h]
\vspace{-0.2in}
\centering
\begin{minipage}{0.9\linewidth}
\centering
\begin{algorithm}[H]
    \small
    \caption{Sampling via GSDM with VP-SDE predictor-corrector solver}
        \textbf{Input:} Score-based models $\mathbf{s}_{\bm{\theta},t}(\cdot)$ and $\mathbf{s}_{\bm{\phi},t}(\cdot)$, maximal diffusion time $T$, number of sampling steps $M$,  Langevin-MCMC step sizes $\{\epsilon_i\}_{i=1}^M$, noise schedules $\{\beta_i\}_{i=1}^M$ and priori distribution $\pi$. \\
        \textbf{Output:} Generated graph data $(\widehat{\mathbf{X}}_0,\widehat{\mathbf{A}}_0)$
    \begin{algorithmic}[1]
        \STATE $t\gets T$\;
        \STATE $(\widehat{\mathbf{X}}_T,\widehat{\bm{\Lambda}}_T)\sim \pi$, $\widehat{\mathbf{U}}_0 \sim \mathrm{Unif}(\{\mathbf{U}\triangleq \mathrm{EigenVectors}(\mathbf{A}), (\mathbf{X}, \mathbf{A}) \sim \mathrm{Data}\})$\;
        \FOR{$m=M-1$ \textbf{to} $0$}
            \STATE $\mathbf{S}_{X} \gets \mathbf{s}_{\bm{\theta},t}(\widehat{\mathbf{X}}_{t},\widehat{\bm{\Lambda}}_{t},\widehat{\mathbf{U}}_0)$, 
            $\mathbf{S}_{\Lambda} \gets \mathbf{s}_{\bm{\phi},t}(\widehat{\mathbf{X}}_{t},\widehat{\bm{\Lambda}}_{t},\widehat{\mathbf{U}}_0)$\;
            \STATE $t' \gets t - T/(2M)$\;
            \STATE $\widehat{\mathbf{X}}_{t'} \gets (2 - \sqrt{1- \beta_{m+1}}\widehat{\mathbf{X}}_{t} + \beta_{m+1} \mathbf{S}_{X}) + \sqrt{\beta_{m+1}} \mathbf{z}_{X}$, $\mathbf{z}_X\sim N\left(\bm{0},\mathbf{I}\right)$ \COMMENT{Prediction step: $\mathbf{X}$}
            \STATE $\widehat{\bm{\Lambda}}_{t'} \gets (2 - \sqrt{1- \beta_{m+1}}\widehat{\bm{\Lambda}}_{t} + \beta_{m+1} \mathbf{S}_{\Lambda}) + \sqrt{\beta_{m+1}}\mathbf{z}_{\Lambda}$, $\mathbf{z}_{\Lambda}\sim N\left(\bm{0},\mathbf{I}\right)$ \;\COMMENT{Prediction step: $\bm{\Lambda}$}
            
            \STATE $\mathbf{S}_{X} \gets \mathbf{s}_{\bm{\theta},t'}(\widehat{\mathbf{X}}_{t'}, \widehat{\bm{\Lambda}}_{t'},\widehat{\mathbf{U}}_0)$,
            $\mathbf{S}_{\Lambda} \gets \mathbf{s}_{\bm{\phi},t'}(\widehat{\mathbf{X}}_{t'}, \widehat{\bm{\Lambda}}_{t'},\widehat{\mathbf{U}}_0)$\; 
            \STATE $t\gets t' - T/(2M)$\;
            \STATE $\widehat{\mathbf{X}}_{t}\gets \widehat{\mathbf{X}}_{t'} + \epsilon_i \mathbf{S}_{X} +  \sqrt{2\epsilon_i}\bm{z}_{X}$, $\mathbf{z}_{X}\sim N\left(\bm{0},\mathbf{I}\right)$\; \COMMENT{Correction step: $\mathbf{X}$}
            \STATE $\widehat{\bm{\Lambda}}_{t} \gets \widehat{\bm{\Lambda}}_{t'} + \epsilon_i \mathbf{S}_{\Lambda} +  \sqrt{2\epsilon_i}\mathbf{z}_{\Lambda}$, $\mathbf{z}_{\Lambda}\sim N\left(\bm{0},\mathbf{I}\right)$\; \COMMENT{Correction step: $\bm{\Lambda}$}
        \ENDFOR
        \STATE $\widehat{\mathbf{A}}_0 = \widehat{\mathbf{U}}_0 \widehat{\bm{\Lambda}}_0 \widehat{\mathbf{U}}_0^\mathbf{T}$\; 
        \STATE \textbf{Return:} $(\widehat{\mathbf{X}}_0,\widehat{\mathbf{A}}_0)$\; 
    \end{algorithmic}
    \label{alg_sample}
\end{algorithm}
\end{minipage}

\end{figure}
\subsection{Algorithm of GSDM }\label{algorithm}

The pseudo codes of training and sampling with GSDM are summarized in Algorithm \ref{alg_train} and Algorithm \ref{alg_sample}. In a nutshell, we first train a GSDM from real data via minimizing score-matching error. On top of that, we are able to generate graph features and eigen-values of graph adjacency matrices by reversing the forward spectral SDE. By uniformly sampling eigen-vectors from training set, we can construct plausible graph adjacency matrix via spectral composition.

\begin{figure}[t!]
    \centering
    \includegraphics[width=0.75\columnwidth]{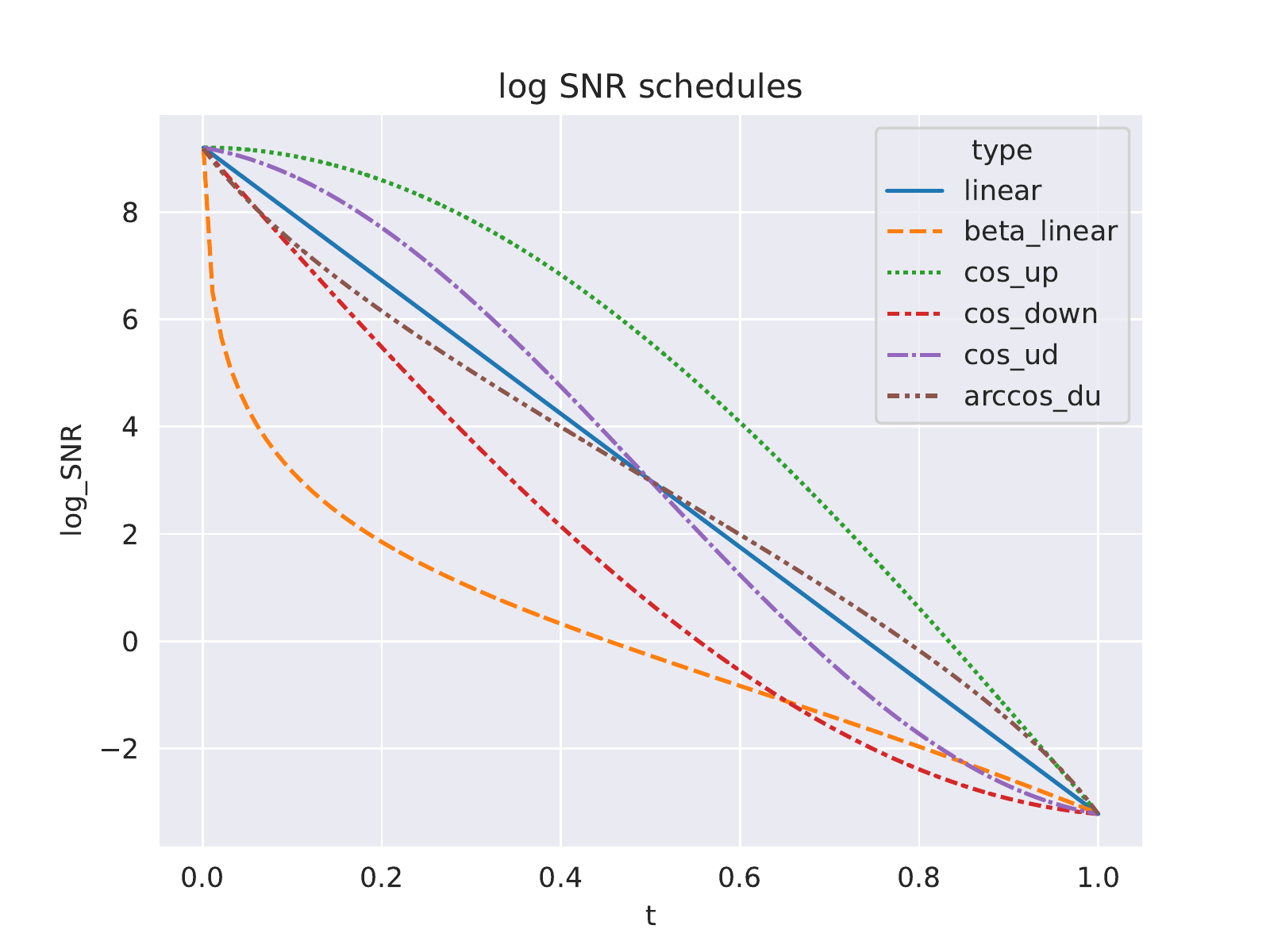}
    \caption{Diffusion schedules.}
    \label{noise_schedule}
\end{figure}
\begin{figure}[t!]
    \centering
    \includegraphics[width=0.45\columnwidth]{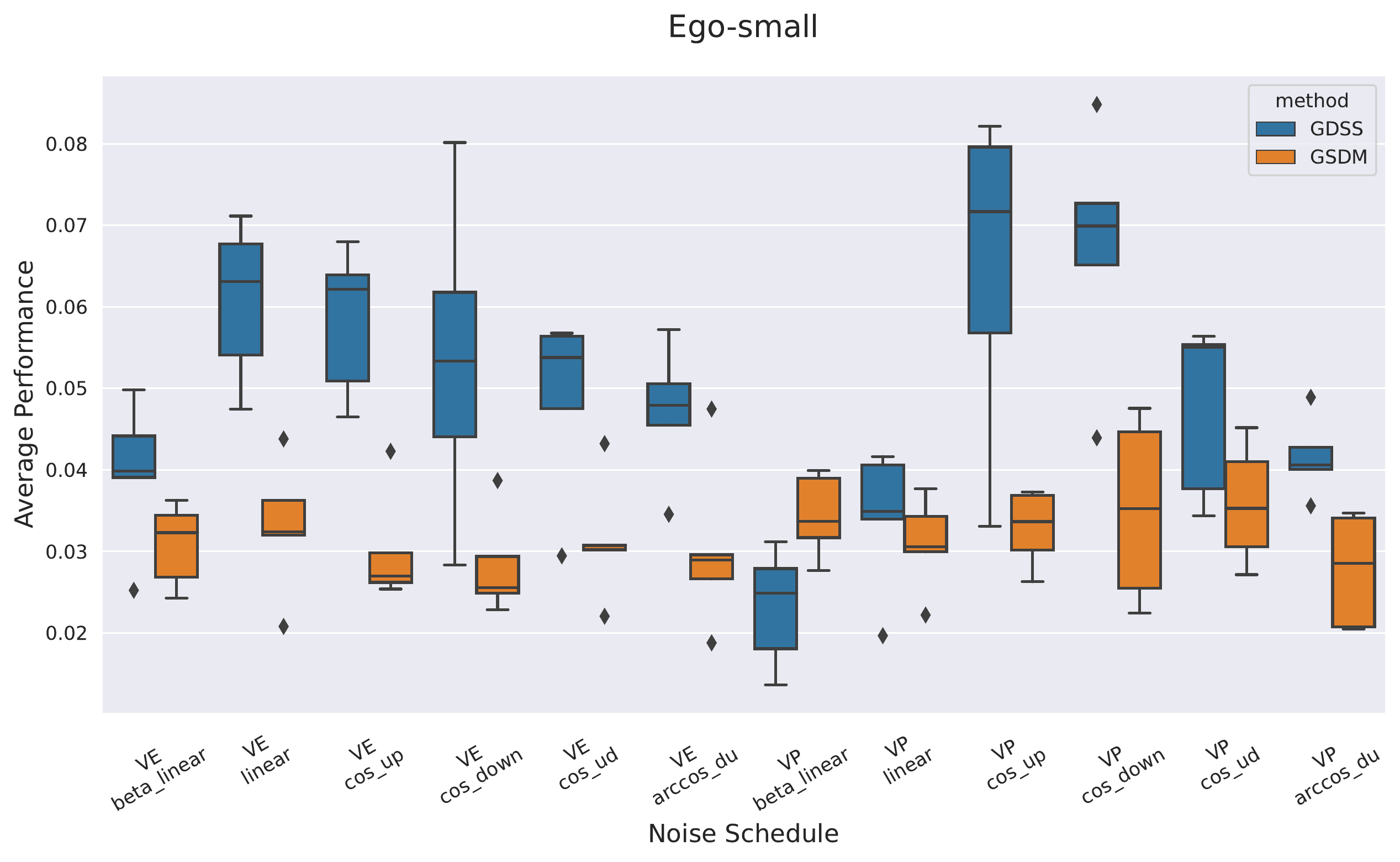}
    \includegraphics[width=0.45\columnwidth]{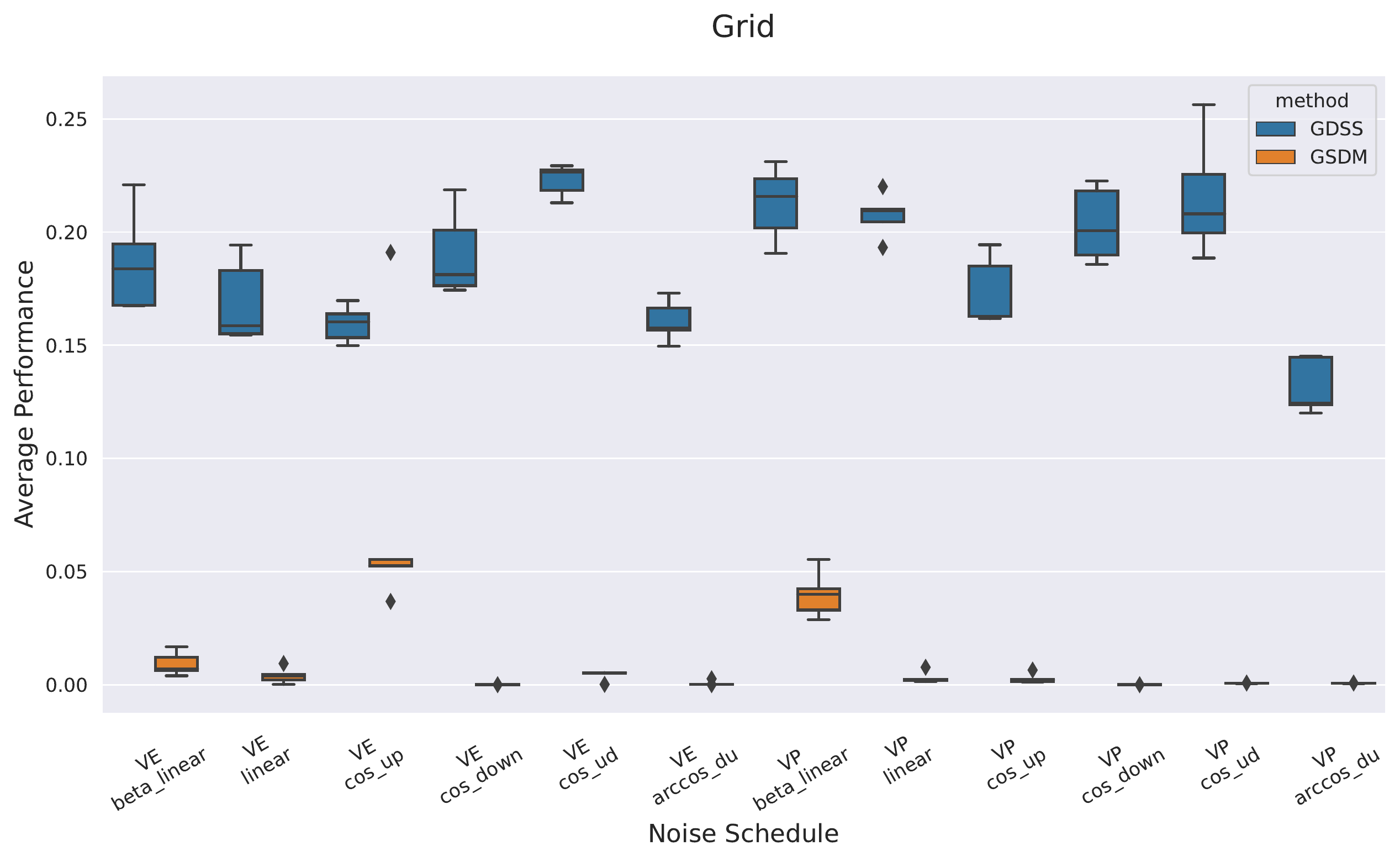}
    \caption{Ablation study on the choice of noise schedules.}
    \label{snr_abalation}
\end{figure}
\subsection{Ablation study on diffusion schedules}\label{diffusion_schedule_ablation}

In this section, we empirically verify the robustness of our proposed GSDM under various types of diffusion schedules. As shown in Figure \ref{noise_schedule}, we design six representative noise schedules with the same initial and final signal-to-noise ratio (defined in \cite{song2020score,improved_ddpm}), while exhibiting different behaviour along the diffusion process. For both VP- and VE-SDE configurations, we train our proposed GSDM with six representative noise schedules and we evaluate the average performance scores. Results on Ego-small and Grid in Figure \ref{snr_abalation} show that GSDM is robust to the choice of diffusion schedules, and it is able to achieve SOTA performance without deliberate noise schedule optimization.

\begin{figure}[h!]
    \centering
    \includegraphics[width=0.98\columnwidth]{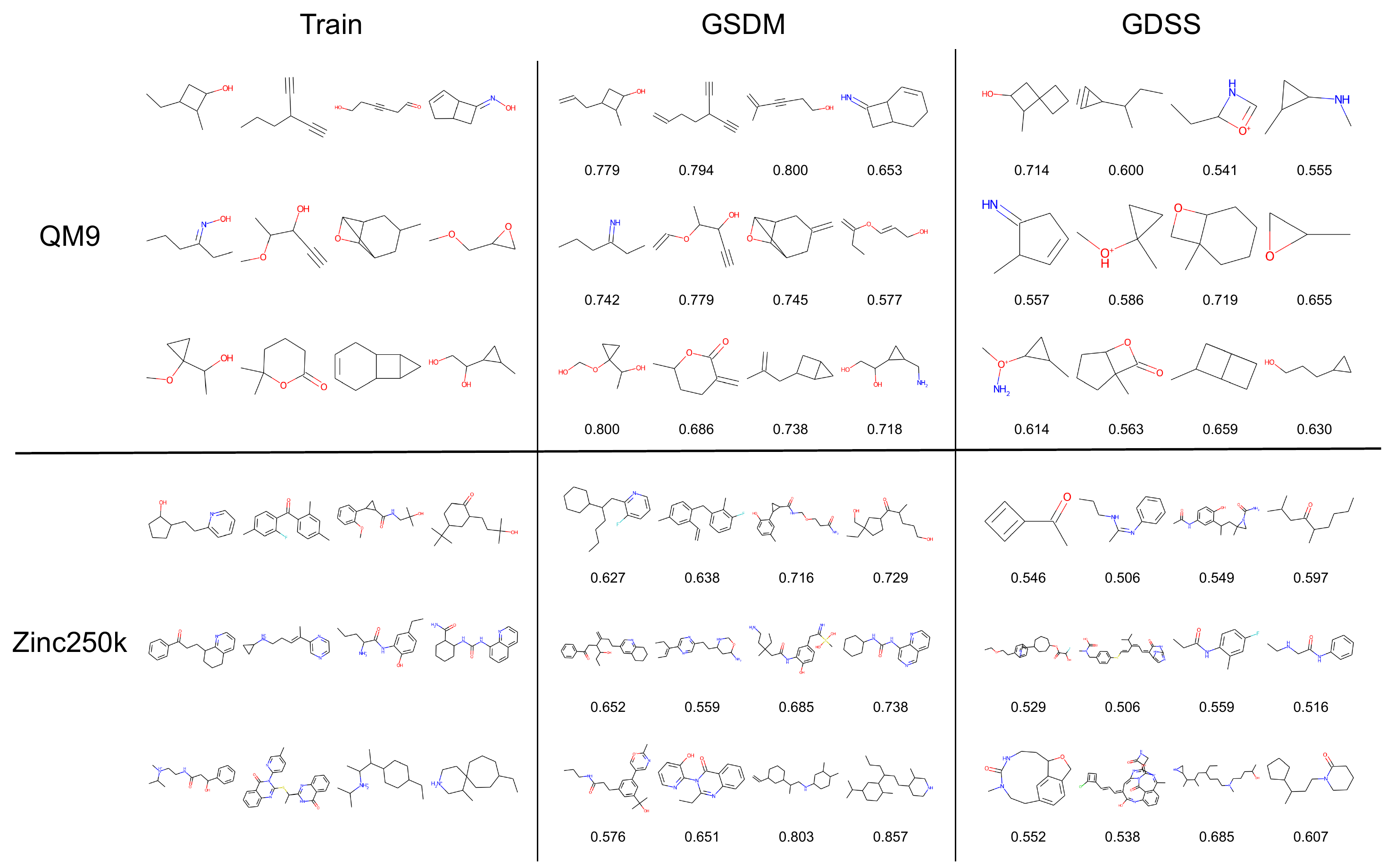}
    \caption{Visualization of molecule generation with maximum Tanimoto similarity of GSDM comparing to GDSS. The left part shows randomly selected molecules from the training set of QM9 and Zinc250k. For each generated molecules, we show the Tanimoto similarity value at the bottom.}
    \label{fig:molecule_visualization}
\end{figure}
\subsection{Visualization of molecule generations}\label{molecule_visualization}

In Figure \ref{fig:molecule_visualization}, we show the generated molecules that are maximally similar to certain training molecules. We compute the molecule similarity using the Tanimoto similarity based on the Morgan fingerprints, which are implemented based on REKit \cite{landrum2016rdkit}. Higher Tanimoto scores indicate that the model is able to generate more similar molecules as the training set, which reflects the learning capability of the model. As shown in the figure, compared to the GDSS model, GSDM is able to generate molecules that have a more similar distribution as the molecules in the training set.

\begin{IEEEbiography}{Tianze Luo} received the Bachelor degree (First Class Honours) and the Master degree from Nanyang Technological University, Singapore, in 2017 and 2019, respectively. He is currently working towards an Alibaba Talent Programme Ph.D. degree with Alibaba-NTU Joint Research Institute, Nanyang Technological University, Singapore. 
\end{IEEEbiography}

\begin{IEEEbiography}{Zhanfeng Mo} is currently a Ph.D. student in the School of Computer Science and Engineering of Nanyang Technological University, Singapore. He received his Bachelor degree in Statistics from University of Science and Technology of China in 2020. His main research interests include theory and algorithms of statistical machine learning.
\end{IEEEbiography}

\begin{IEEEbiography}{Sinno Jialin Pan} is a Provost's Chair Professor with Nanyang Technological University (NTU), Singapore. He received his Ph.D. degree in computer science from the Hong Kong University of Science and Technology (HKUST) in 2011. Prior to joining NTU, he was a scientist and Lab Head of text analytics with the Data Analytics Department, Institute for Infocomm Research, Singapore. He serves as an Associate Editor for IEEE TPAMI, AIJ, and ACM TIST.
\end{IEEEbiography}


\end{document}